\newcommand{\one}{\bm 1}
\newcommand{\LL}{\mathcal L}
\newcommand{\EE}{\mathbb{E}}
\newcommand{\PP}{\mathbb{P}}
\newcommand{\QQ}{\mathbb{Q}}
\newcommand{\nm}[1]{\left\|{#1}\right\|}
\newcommand{\R}{\mathbb{R}}
\newcommand{\ip}[1]{\left<{#1}\right>}
\newcommand{\hp}{\hat{\bm{p}}}
\newcommand{\hq}{\tilde{\bm{q}}}
\newcommand{\whp}{\hat{p}}
\newcommand{\bx}{\tilde{\bm x}}
\newcommand{\bX}{\tilde{\mathcal X}}
\newcommand{\bq}{\tilde{\bm q}}
\newcommand{\bp}{\tilde{\bm p}}
\DeclareMathOperator*{\argmin}{argmin}
\DeclareMathOperator*{\argmax}{argmax}
\theoremstyle{plain}
\newtheorem{theorem}{Theorem}[section]
\newtheorem{prop}[theorem]{Proposition}
\newtheorem{lem}[theorem]{Lemma}
\newtheorem{coro}[theorem]{Corollary}
\newtheorem{defi}[theorem]{Definition}
\newtheorem{ass}[theorem]{Assumption}
\theoremstyle{remark}
\newtheorem{remark}[theorem]{Remark}
\title{Decoding Game: On Minimax Optimality of Heuristic Text Generation Strategies}
\author[1]{Sijin Chen\thanks{chensj@princeton.edu}}
\author[2]{Omar Hagrass\thanks{oh2588@princeton.edu}}
\author[2]{Jason M. Klusowski\thanks{jason.klusowski@princeton.edu}}
\affil[1]{\small Department of Electrical and Computer Engineering, Princeton University}
\affil[2]{\small Department of Operations Research and Financial Engineering, Princeton University}
\date{}
\def\jk#1{\textcolor{red}{Jason: #1}}
\def\oh#1{\textcolor{teal}{Omar: #1}}
\def\sj#1{\textcolor{blue}{Sijin: #1}}
\begin{document}

\maketitle

\begin{abstract}
Decoding strategies play a pivotal role in text generation for modern language models, yet a puzzling gap divides theory and practice. Surprisingly, strategies that should intuitively be optimal, such as Maximum a Posteriori (MAP), often perform poorly in practice. Meanwhile, popular heuristic approaches like Top-$k$ and Nucleus sampling, which employ truncation and normalization of the conditional next-token probabilities, have achieved great empirical success but lack theoretical justifications. In this paper, we propose Decoding Game, a comprehensive theoretical framework which reimagines text generation as a two-player zero-sum game between Strategist, who seeks to produce text credible in the true distribution, and Nature, who distorts the true distribution adversarially. After discussing the decomposibility of multi-step generation, we derive the optimal strategy in closed form for one-step Decoding Game. It is shown that the adversarial Nature imposes an implicit regularization on likelihood maximization, and truncation-normalization methods are first-order approximations to the optimal strategy under this regularization. Additionally, by generalizing the objective and parameters of Decoding Game, near-optimal strategies encompass diverse methods such as greedy search, temperature scaling, and hybrids thereof. Numerical experiments are conducted to complement our theoretical analysis.
\end{abstract}


\section{Introduction}
Decoding strategies underpin the mechanism of generating a text sequence from a given language model, and therefore become an essential component of modern Large Language Models \citep{openai2024gpt4}. Specifically, given an autoregressive language model $\widehat{\PP}$ which encodes the conditional next-token probability $\widehat{\PP}(X_t|X_{<t})$, one aims to generate a high-quality sequence $(X_1,\dots,X_T)$ by some strategy based on $\widehat\PP$. Perhaps one of the most straightforward strategies is Maximum a Posteriori (MAP), looking for the most probable sequence, i.e., the one with the maximum predicted likelihood $\widehat\PP(X_1,\dots,X_T)$. Considering the computation cost of an exact MAP, one would naturally turn to some local heuristic variants such as greedy search, beam search \citep{graves2012sequence,sutskever2014sequence}, and contrastive search \citep{su2022a}.

These deterministic searching methods based on likelihood maximization can achieve state-of-the-art performance especially in closed-ended tasks, such as translation, coding, math problem solving, and summarization \citep{shi2024thorough}. Counter-intuitively, in open-ended text generation tasks, these strategies usually lead to low-quality, degenerate texts, even with heavily trained state-of-the-art language models \citep{shi2024thorough,wiher2022decoding,hashimoto2019unifying}. Instead, stochastic sampling methods that randomly select the next token are observed to yield better outputs. Popular strategies include Top-$k$ sampling \citep{fan2018hierarchical}, Nucleus (Top-$p$) sampling \citep{holtzman2020curious}, $\eta$ sampling \citep{hewitt2022truncation}, and Mirostat sampling \citep{basu2021mirostat}, among others. 

They follow a truncation-normalization design, namely (1) sampling the next token from a truncated distribution by removing the tail probabilities, and (2) rescaling the remaining probabilities by a normalizing constant. Some other methods like Basis-Aware sampling \citep{finlayson2024closing} and Typical sampling \citep{meister2023locally} also rely on truncation but may discard high-probability tokens besides the tail; see Section \ref{sec:related works} for details. Formally, if $(\whp_1,\dots,\whp_d)$ is the vector of the predicted probability of all candidate next-tokens $\{1,\dots,d\}$, these methods decide an index set $\mathcal S$ and sample a token $i$ with probability $$q_i\propto\whp_i\mathds1_{(i\in\mathcal S)}.$$ Here, different designs define the truncation set $\mathcal S$ using distinct criteria: Top-$k$ sampling uses a fixed size threshold, Nucleus sampling uses cumulative probability mass, and entropy-based methods use information-theoretic thresholds ($\eta$).


Regarding decoding strategies, there is an interesting dichotomy between theory and practice. From a statistical perspective, likelihood maximization approaches are desired to succeed by seeking or approximating the posterior mode of $\widehat\PP$, but usually underperform in practice on open-ended generation tasks. On the contrary, despite their empirical superiority over likelihood maximization, the design of randomized sampling strategies remains mostly heuristic and the theory behind is poorly understood. To resolve this dichotomy, this paper aims to propose a comprehensive theoretical framework of text generation, where \textit{heuristic sampling strategies, rather than likelihood maximization, are proved to be (near-)optimal}. 

Now, we shall introduce the motivations behind our framework before presenting it formally.

\subsection{Motivation and our framework}

\textbf{First thought.} At first sight, a statistician may naturally relate these truncation methods with the concept of sparsity and regularization, and further attempt to handcraft a constrained or penalized optimization problem where they are optimal strategies. This is easier than it may sound: for example, we can design a distance metric so that Top-$k$ sampling is the best sparse approximation to the original distribution according to this metric, such as $\ell_0$ distance that directly controls sparsity. The major flaw of such approaches is that their objectives and constraints mostly come from non-principled, reverse engineering and lack statistical motivations. Therefore, they may not be able to provide theoretical insight into questions like why sparse solutions are favored, and why we should adopt a specific regularization term and distance metric.

\textbf{Second thought.} Let us restart from the most significant observation that likelihood-maximization approaches fail in practice. What does this imply? If $\PP$ is the \textit{true} distribution of natural language, it is reasonable to expect that the trained language model $\widehat\PP$ is away from $\PP$. This makes $\widehat\PP$-likelihood an unreliable criterion of a generated text, and hence leads to the failure of likelihood maximization. On the other hand, the appropriate criterion a strategy would like to maximize is the $\PP$-likelihood of a generated sequence. 

However, note that for generality, we restrict ourselves from assuming too many structures on the true distribution $\PP$, except for its bounded deviation from $\widehat\PP$. This ``model-free'' setup brings an adversarial nature to text generation: in the worst case, $\PP$ can try its best to degrade the quality of our generated text within its distance budget.

\textbf{Decoding Game.} These ideas lead to our proposal, \textit{Decoding Game}, a two-player zero-sum game between Strategist (S) and Nature (N). In this game, player S chooses a (randomized) decoding strategy to generate a text sequence that, in expectation, achieves good log-likelihood in the true distribution. On the other hand, player N is always able to shift the true distribution adversarially to reduce the text quality. Knowing the decoding strategy beforehand, player N chooses the worst-case true distribution $\PP$. Formally, a $T$-step Decoding Game is represented by
\begin{align*}
    \max_{\phantom{\widehat\PP}\QQ\phantom{\widehat\PP}}\min_{\PP\in N(\widehat\PP)}\EE_\QQ\log\PP(X_1,\dots,X_T\mid X_0),
\end{align*}
where, conditioned on a given prompt $X_0$, $\QQ$ is the probability measure on $(X_1,\dots,X_T)$ induced by the decoding strategy of player S, and $N(\widehat\PP)$ refers to a neighborhood of $\widehat\PP$. The objective of the game is the true log-likelihood of a length-$T$ sequence generated from strategy $\QQ$, in expectation.

For player S, an equivalent perspective is robust optimization \citep{ben2009robust}. Since player S has no knowledge of $\PP$, it aims to find a strategy that can work consistently well for all the possible true distributions. To achieve such robustness against an adversarial distributional shift, player S should optimize the log-likelihood in the \textit{worst} case of $\PP$ to ensure that its strategy does not lead to poor performance in \textit{any} case, which corresponds to the minimax formulation.

If there is no adversary ($\PP=\widehat\PP$ always holds), then the game reduces to $\widehat\PP$-likelihood maximization, and naive MAP is exactly the solution. However, when an adversary is present, MAP becomes sub-optimal and the game invites more interesting consequences.

\subsection{Contribution}

We conduct in-depth investigations into Decoding Game. The main results include:
\begin{itemize}
    \item \textbf{Criticality of one-step Decoding Game.} In Section \ref{sec:formulation}, we identify the role of one-step Decoding Game in understanding the multi-step setting. We build up a recursive structure for the multi-step game, and argue the computational intractability of obtaining a global solution in modern LLMs. Instead, we construct a locally optimal mechanism that involves solving a one-step Decoding Game locally at each timestep, and justify its worst-case performance.
    
    \item \textbf{Optimality of heuristic strategies under implicit regularization.} Under total variation (TV) distance, we provide closed-form solutions to the one-step Decoding Game for both players. In Section \ref{sec:p-strat}, we show that the optimal strategy of player N imposes an $\ell_\infty$-type regularization on the log-likelihood. As a result, tail truncation-normalization sampling strategies emerge as first-order approximations to the optimal strategy of player S; see Section \ref{sec:q-strat}.

    \item \textbf{Generalizability of the framework.} In Section \ref{sec:gen-obj}, we further discuss the consequences of using different objectives for Decoding Game, recovering other types of strategies such as temperature-based methods. The exclusive advantage of log-likelihood, in contrast to other objectives, is also highlighted under the general framework.
    
    \item \textbf{Empirical evidence.} Building on the general theory, in Section \ref{sec:exp}, we propose \textit{Game sampling} (Algorithm \ref{alg:game_sampling}) and empirically evaluate its performance in open-ended text generation with GPT-2 models. The experiments suggest that Game sampling is able to outperform other strategies, which corroborates our optimality results. The code is available at \url{https://github.com/omar-hagrass/decoding-game}.
\end{itemize}

Decoding Game provides a comprehensive theoretical framework that rigorously establishes optimality results for heuristic sampling strategies. It largely differs from existing interpretations which, to different extents, provide some theoretical viewpoints as partial justifications for their design. We will briefly review them in Section \ref{sec:related works}. 

At the same time, we believe that the statistically meaningful motivations and minimal assumptions behind Decoding Game open up its potential for future research on decoding strategies; see discussions in Section \ref{sec:conclusion}.

\section{Related works} \label{sec:related works}
\subsection{Existing theoretical interpretations}

Theoretical explanations for decoding methods have been very sparse and existing works are relatively limited. Known perspectives presented in literature include (1) overestimation of token probabilities, (2) surprisal and perplexity of generated text, and (3) implicit regularization on MAP. 

Following the long-held intuition \citep{holtzman2020curious} that language models tend to assign excessive probability to the unreliable tail, \citet{finlayson2024closing} explained truncation as a remedy for this problem, showing that it can correctly discard the tokens out of the support of the true distribution when overestimation is upper-bounded. They further credited overestimation to the Softmax Bottleneck \citep{yang2018breaking} brought by the language model architecture, motivating a new truncation mechanism that may remove high-probability tokens besides the tail. However, this structural assumption may not well account for the broadness of the source of overestimation. Additionally, it is unclear why rescaling all the remaining probabilities by the same constant is considered the best approach after truncation.

A similar idea of identifying the correct support has also driven prior works such as \citet{hewitt2022truncation} and \citet{meister2023locally}. \citet{hewitt2022truncation} modeled the predictions as a mixture of true distribution and uniform-like smoothing distribution, and viewed truncation as a way to desmooth the output. \citet{meister2023locally} proposed to compute the support that better aligns with the information-theoretic metrics of human text measured by token surprisal and entropy, under assumptions on the behavior of human speakers. In this method, high-probabilty tokens may be discarded as well.

\citet{basu2021mirostat} theoretically derived the perplexity of various sampling methods under the statistical assumption that next-token probabilities follow a Zipf distribution, comparing how the hyperpamameter of each method influences the order of perplexity. This particular Zipfian assumption may not fully capture the complicated probability distributions encoded by modern language models.

An earlier work \citep{meister2020beam} attempted to explain heuristic methods such as beam search as an implicit regularization imposed on MAP. However, the design of regularization term seems to lack statistical motivations. They also suggest a qualitative relationship between the regularization term and the uniformity of surprisal of generated sequences, but the detailed mechanism is not well understood mathematically.

Overall, each of them motivates the design of heuristic decoding methods to a certain extent, but to the best of our knowledge, we are not aware of any comprehensive theoretical framework that establishes optimality results.

\subsection{Text generation as decision making}
Another line of work, though not directly working on the theory of heuristic decoding schemes, views the problem of text generation as optimizing the policy of a decision-making agent working in an environment with or without adversary. This perspective resonates with our rationale behind the Decoding Game. For example, \citet{jacob2024consensus} proposed a game between a generation strategy and a text quality discriminator, and empirically demonstrated the advantage of the decoding strategy at the Nash equilibrium of this game in multiple tasks. Other recent works such as \citet{snell2023offline,kim2023critic,mudgal2024controlled} modeled next-token generation as (token-level) Markov decision processes, and applied reinforcement learning techniques for controlled decoding. 

\subsection{Robust optimization and regularization}\label{sec:robust-opt}

Our framework also draws a connection to robust optimization \citep{ben2009robust}, which aims to find solutions with stable performance under data uncertainty or perturbations. Given the adversarial nature of uncertainty, robust optimization is typically formulated as a minimax problem that seek the best response to the worst-case data realizations. Interestingly, while the sparsity brought by truncation sampling can be seen as regularization, it is known that regularization and robustness are strongly correlated in various machine learning problems \citep{derman2021twice,shaham2018understanding,bertsimas2011theory}, as solving an optimization problem with regularization is equivalent to solving its non-regularized robust counterpart. For instance, lasso and ridge optimization can be reformulated as robust optimization problems, where the data matrix is subject to different types of perturbations: ridge regression corresponds to Frobenius norm-bounded perturbations while lasso corresponds to column-wise $\ell_2$-norm bounded perturbations \citep{shaham2018understanding}. The theory developed in this paper also confirms such an equivalence between robustness and regularization.

\section{Formulation}\label{sec:formulation}
\subsection{Notations}
Throughout, we use boldface letters to represent vectors and vector-valued mappings, and use blackboard bold letters to represent probability measures and expectations. For a sequence $(x_0,x_1,\dots,x_T)$, we define $x_{<t}=(x_0,\dots,x_{t-1})$. For a vector $\bm a=(a_1,\dots,a_d)$, $\bm a_{1:i}=(a_1,\dots,a_i)$ is the vector extracting its first $i$ components. The $\ell_p$ norm ($p \geq 1$) of $\bm a$ is defined as $\nm{\bm a}_p=(\sum_{i=1}^d|a_i|^p)^{1/p}$, with $\ell_\infty$ norm $\nm{\bm a}_\infty=\max_{i\leq d}|a_i|$. The total variation (TV) distance between two probability vectors $\bm p,\bm q$ is defined as $d_\text{TV}(\bm p,\bm q)=\frac{1}{2}\nm{\bm p-\bm q}_1$. For a function $f : \mathbb{R} \to \mathbb{R}$,  $f(\bm a)$ represents the elementwise application of $f$ to the vector $\bm a$, and $\bm a/\bm b$ represents the elementwise division of $\bm a$ by $\bm b$. For a finite set $\mathcal V$, $\Delta(\mathcal V)$ denotes the probability simplex of dimension $|\mathcal V|$, and $\mathcal V^t$ denotes the Cartesian product $\mathcal V\times ...\times\mathcal V$ ($t$ times). We always assume that optimization variables have to satisfy probability constraints (e.g., lying in the probability simplex, or the space of probability measures), and will not specify them explicitly for conciseness.

\subsection{Decoding Game} \label{sec:multi-step}

We describe the $T$-step Decoding Game in full detail. Suppose $\mathcal V=\{1,2,...,d\}$ is the vocabulary of all $d$ tokens, and the natural language follows a true distribution $\PP$. Upon training, we obtain a language model $\widehat\PP$ that approximates the true $\PP$. Beginning with a prescribed context $X_0=(\text{prompt},\ip{\textsc{bos}})$, we generate a sequence $(X_1,\dots,X_T)$ with a possibly randomized decoding strategy, which is represented by another measure $\QQ$ such that next tokens are selected with probability $\QQ(X_t\mid X_{<t})$. We can then evaluate the quality of the generated sequence by testing whether the true distribution $\PP$ is also likely to yield such a sequence. Specifically, for a typical sequence generated from $\QQ$, we use its log-likelihood in $\PP$-measure as the criterion
\begin{align*}
    \LL^T(\QQ,\PP)&=\EE_{\QQ}\log\PP(X_1,\dots,X_T\mid X_0),
\end{align*}
which is also the negative cross-entropy between $\QQ$ and $\PP$ when viewed as distributions on $\mathcal V^T$.

The $T$-step Decoding Game is a two-player zero-sum game on this criterion between Strategist (S) and Nature (N), where player S chooses $\QQ$ (by choosing a decoding strategy) to maximize the criterion of the generation, while player N chooses $\PP$ within a neighborhood of $\widehat\PP$ to minimize it. This gives rise to the following formulation: 
\begin{align}
\max_{\phantom{\widehat\PP}\QQ\phantom{\widehat\PP}}\min_{\PP\in N(\widehat\PP)}\LL^T(\QQ,\PP)=\max_{\phantom{\widehat\PP}\QQ\phantom{\widehat\PP}}\min_{\PP\in N(\widehat\PP)}\EE_\QQ\log\PP(X_1,\dots,X_T\mid X_0).\tag{MDG}\label{eq:decoding-game}
\end{align}
In other words, without knowing $\PP$ specifically, player S seeks a strategy to optimize the objective $\LL^T$ in the worst case among $N(\widehat\PP)$.

As a starting point of further understanding of the multi-step game, we build up a picture at $T=1$. In this case, the probability measure $\PP$ is represented by a $d$-dimensional probability vector $\bm p\in\Delta(\mathcal V)$, where $p_i=\PP(X_1=i\mid X_0)$. Similarly, the one-step strategy $\QQ$ is represented by $\bm q\in\Delta(\mathcal V)$. We use TV distance to define the neighborhood $N(\hp)=\{\bm p:d_\text{TV}(\bm p,\hp)\leq\epsilon\}$, leading to the one-step Decoding Game
\begin{align}
\max_{\phantom{\widehat\PP}\QQ\phantom{\widehat\PP}}\min_{\PP\in N(\widehat\PP)}\LL^1(\QQ,\PP)=\max_{\phantom{\hp}\bm q\phantom{\hp}}\min_{\bm p\in N(\hp)}\bm q^{\top}\log\bm p.\tag{ODG}\label{eq:one-step-game}
\end{align}
Our theoretical analysis in Section \ref{sec:theory} will be mainly devoted to the one-step setting $\eqref{eq:one-step-game}$. Before that, we shall still take a deeper look into the general \eqref{eq:decoding-game} and justify why the one-step game is a representative case that captures the essence of the general setting.

\subsection{Reduction from multiple steps}
\label{sec:reduce_one_step}
In the multi-step setting, given a context $x_{<t}\in\mathcal V^{t-1}$, a probability measure $\PP$ computes the conditional next-token distribution $\PP(\ \cdot\mid x_{<t})$. Similar to one-step setting, such a next-token distribution corresponds to a $d$-dimensional probability vector $\bm p_t(x_{<t})\in\Delta(\mathcal V)$. We define the neighborhood in \eqref{eq:decoding-game} as
\begin{align*}
    N(\widehat\PP)=\left\{\PP:d_\text{TV}(\bm p_t(x_{<t}), \hp_t(x_{<t}))\leq\epsilon,\ \forall x_{<t}\in\mathcal V^{t-1}\text{ and }t\leq T\right\},
\end{align*}
which, as an extension from the one-step case, controls the dissimilarity between any pairs of conditional next-token distribution in TV distance.\footnote{It can also be interpreted as the $\epsilon$-ball of the TV-sup distance $d(\PP,\widehat\PP)=\max\{d_\text{TV}(\bm p_t(x_{<t}),\hp_t(x_{<t})):x_{<t}\in V^{t-1},t\leq T\}$, which is half of the $(1,\infty)$ mixed norm \citep{horn1985matrix} of the matrix concatenating all the conditional distributions difference $\bm p_t(x_{<t})-\hp_t(x_{<t})$ as its columns.}

Then, we can recast \eqref{eq:decoding-game} as
\begin{align}
& \max_{\phantom{\widehat\PP}\QQ\phantom{\widehat\PP}}\min_{\PP\in N(\widehat\PP)}\EE_\QQ\log\PP(X_1,\dots,X_T\mid X_0)\nonumber=\max_{\phantom{\widehat\PP}\QQ\phantom{\widehat\PP}}\min_{\PP\in N(\widehat\PP)}\sum_{t=1}^T\EE_\QQ\log\PP(X_t\mid X_{<t})\nonumber\\
    &\qquad=\max_{\phantom{\widehat\PP}\QQ\phantom{\widehat\PP}}\min_{\PP\in N(\widehat\PP)}\sum_{t=1}^T\EE_{X_{<t}\sim\QQ}[\EE_{X_{t}\sim\QQ(\cdot\mid X_{<t})}[\log\PP(X_t\mid X_{<t})]]\nonumber\\
    &\qquad=\max_{\phantom{\widehat\PP}\QQ\phantom{\widehat\PP}}\min_{\PP\in N(\widehat\PP)}\sum_{t=1}^T\EE_{X_{<t}\sim\QQ}[\bm q_t(X_{<t})^\top\log\bm p_t(X_{<t})]\nonumber\\
    &\qquad=\max_{\phantom{\widehat\PP}\QQ\phantom{\widehat\PP}}\sum_{t=1}^T\EE_{X_{<t}\sim\QQ}\left[\min_{\bm p_t(X_{<t})\in N(\hp_t(X_{<t}))}\bm q_t(X_{<t})^\top\log\bm p_t(X_{<t})\right]\label{eq:multi-step-2}.
\end{align}
Here, \eqref{eq:multi-step-2} uses the fact that the minimization problem is inherently separable in each next-token distributions $\bm p_t(x_{<t})$.

Directly solving \eqref{eq:decoding-game} or \eqref{eq:multi-step-2} is computationally intractable. Theoretically, one can exploit the recursive structure in \eqref{eq:multi-step-2} and apply dynamic programming, but the scale of the problem grows as $\Omega(d^T)$, and such an approach would take $\mathrm{poly}(d^T)$ time. Considering the computational cost of working on a modern LLM, in this paper we turn to local solutions that do not probe into the global structure of the problem. It is given by a locally optimal mechanism
\begin{equation}
    \hq_t(x_{<t})=\argmax_{\bm q_t(x_{<t})}\min_{\phantom{a^{a^a}}\bm p_t(x_{<t})\in N(\hp_t(x_{<t}))\phantom{a^{a^a}}}\bm q_t(x_{<t})^\top\log\bm p_t(x_{<t})\quad\forall x_{<t}\in\mathcal V^{t-1},\label{eq:reduction-1}
\end{equation}
thus going back to \eqref{eq:one-step-game}. In addition to significantly alleviating the computational cost, it turns out that such a local mechanism also provides the optimal worst-case performance over all decision processes that do not exploit future information of $\widehat\PP$. We state this result formally next.

\begin{defi}
    We say a strategy $\QQ=\QQ(\widehat\PP)$ has no foresight if for any $t\leq T$, we have $\bm q_t(x_{<t};\widehat\PP)=\bm q_t(x_{<t};\widehat\PP')$ for any $\widehat\PP$ and $\widehat\PP'$ satisfying $\hp_s(x_{<s})=\hp'_s(x_{<s})$ $\forall s\leq t$.
\end{defi}
\begin{ass}\label{ass:multi-step}
    $\epsilon<\|\hp_t(x_{<t})\|_\infty$ for all $t\leq T$ and $x_{<t}\in\mathcal V^{t-1}$.
\end{ass}
\begin{prop} \label{prop-greedy}
    Given arbitrary $\widehat\PP$ from the space of probability measures on $\mathcal V^{T}$, let $\QQ=\QQ(\widehat\PP)$ be any strategy with no foresight. Moreover, let $\PP^*=\PP^*(\widehat\PP,\QQ)$ be the optimal strategy of player N against $\QQ$. If Assumption \ref{ass:multi-step} holds and $\widetilde\QQ=\widetilde\QQ(\widehat\PP)$ is the strategy induced by $\eqref{eq:reduction-1}$, then
    \begin{align*}
     \inf_{\widehat\PP}\LL^T(\widetilde{\QQ},\PP^*) \geq   \inf_{\widehat\PP}\LL^T(\QQ,\PP^*).
    \end{align*}
\end{prop}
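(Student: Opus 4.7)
The plan is to reduce the multi-step comparison to a single scalar via a carefully chosen adversarial $\widehat\PP$. Define the one-step minimax value $W(\hp) := \max_{\bm q \in \Delta(\mathcal V)} \min_{\bm p \in N(\hp)} \bm q^\top \log \bm p$, which under Assumption \ref{ass:multi-step} is well-defined (as an extended real) at each admissible $\hp$. Starting from the identity \eqref{eq:multi-step-2}, for any no-foresight $\QQ$ and its adversarial best response $\PP^* = \PP^*(\widehat\PP, \QQ)$,
\begin{align*}
    \LL^T(\QQ, \PP^*) = \sum_{t=1}^T \EE_{X_{<t} \sim \QQ}\Bigl[\min_{\bm p_t(X_{<t}) \in N(\hp_t(X_{<t}))} \bm q_t(X_{<t})^\top \log \bm p_t(X_{<t})\Bigr] \leq \sum_{t=1}^T \EE_{X_{<t} \sim \QQ}\bigl[W(\hp_t(X_{<t}))\bigr]
\end{align*}
by the max-min inequality. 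For the locally optimal $\widetilde\QQ$, the pointwise definition \eqref{eq:reduction-1} makes the inner minimum equal exactly $W(\hp_t(X_{<t}))$ at every prefix, so the above becomes an equality.

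Next I would pass to the infimum over $\widehat\PP$. Set $w^* := \inf\{W(\hp) : \hp \in \Delta(\mathcal V),\ \|\hp\|_\infty > \epsilon\}$. Since every conditional satisfies Assumption \ref{ass:multi-step}, $W(\hp_t(X_{<t})) \geq w^*$ almost surely, which gives $\inf_{\widehat\PP} \LL^T(\widetilde\QQ, \PP^*) \geq T w^*$. For the matching upper bound on the $\QQ$ side, I would choose a sequence $\hp^{(n)}$ in the admissible set with $W(\hp^{(n)}) \to w^*$ and define $\widehat\PP^{(n)}$ to be the probability measure whose conditional next-token distribution at \emph{every} prefix equals $\hp^{(n)}$. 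For this $\widehat\PP^{(n)}$, the integrand above no longer depends on $X_{<t}$, so the bound simplifies to $\LL^T(\QQ, \PP^*) \leq T W(\hp^{(n)})$ irrespective of the law of $\QQ$. Sending $n \to \infty$ yields $\inf_{\widehat\PP} \LL^T(\QQ, \PP^*) \leq T w^* \leq \inf_{\widehat\PP} \LL^T(\widetilde\QQ, \PP^*)$, completing the proof.

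\textbf{Main obstacle.} The hard part is that, for a generic fixed $\widehat\PP$, the expectations $\EE_{X_{<t} \sim \widetilde\QQ}[W(\hp_t(X_{<t}))]$ and $\EE_{X_{<t} \sim \QQ}[W(\hp_t(X_{<t}))]$ are taken under different prefix laws: a suboptimal $\QQ$ could in principle steer into prefixes where $W$ happens to be larger, so a direct pointwise comparison fails. The constant-conditional construction above is the key trick, as it renders $W(\hp_t(X_{<t}))$ independent of the prefix and decouples the expectation from the strategy, collapsing both infima to the same scalar $T w^*$. A minor subtlety is that $w^*$ may equal $-\infty$ when the minimizing sequence has $\|\hp^{(n)}\|_\infty \downarrow \epsilon$, in which case both infima are $-\infty$ and the conclusion holds trivially; Assumption \ref{ass:multi-step} is really used only to keep $W$ well-defined at each conditional encountered.
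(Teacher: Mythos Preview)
Your argument is correct and is genuinely simpler than the paper's route. Both proofs start from the identity $\LL^T(\widetilde\QQ,\PP^*)=\sum_t\EE_{\widetilde\QQ}[W(\hp_t(X_{<t}))]$ and the pointwise bound $\LL^T(\QQ,\PP^*)\leq\sum_t\EE_{\QQ}[W(\hp_t(X_{<t}))]$, but they diverge on how to pass to the infimum. The paper first invokes Berge's Maximum Theorem to show that $W$ (their $\overline{\mathcal M}$) is continuous, uses compactness to attain the infimum of $\LL^T(\widetilde\QQ,\PP^*)$ at some $\widehat\PP^*$, and then constructs a competitor $\widehat\PP^{**}$ for $\QQ$ by keeping $\widehat\PP^*$ unchanged up to the first time $t_0$ where $\bm q_{t_0}\neq\bq_{t_0}$ and replacing all later conditionals by worst-case ones harvested from $\widehat\PP^*$; the no-foresight hypothesis is invoked precisely so that altering the future of $\widehat\PP$ does not change $\QQ$'s behavior before $t_0$. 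Your constant-conditional construction bypasses all of this: by forcing every $\hp_t(x_{<t})$ to equal the same near-minimizer $\hp^{(n)}$, the integrand $W(\hp_t(X_{<t}))$ becomes deterministic and the prefix law of $\QQ$ drops out entirely, so no continuity, no compactness, and indeed no restriction on foresight are required. As a byproduct you identify both infima explicitly as $Tw^*$, which the paper does not state.
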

Here, we make Assumption \ref{ass:multi-step} so that the optimal value of the game is always well-defined, by staying away from $-\infty$. Proposition \ref{prop-greedy} provides worst-case justifications for optimizing the imminent one-step reward of the game at each timestep $t$. In what follows, we will be devoted to the one-step game \eqref{eq:one-step-game} for theoretical analysis.

\section{Theoretical analysis}\label{sec:theory}
In this section, we study the optimal strategies for both $\bm p$ and $\bm q$ in \eqref{eq:one-step-game}. We will show that the optimal $\bm q$ imposes an $\ell_\infty$-type regularization on the log-likelihood, and the optimal $\bm p$ solving the regularized maximization yields tail truncation-normalization sampling methods. Finally, we extend our analysis from log-likelihood to a general type of objectives, which recovers other heuristic methods and also highlights an exclusive advantage of log-likelihood.

For \eqref{eq:one-step-game}, we make the following assumptions.
\begin{ass}\label{ass:ordering}
The probabilities are strictly positive and, without loss of generality, sorted in decreasing order, i.e., $\whp_1 \geq \whp_2 \geq\dots\geq \whp_d>0$.
\end{ass}
\begin{ass}\label{ass:epsilon}
The distance budget $\epsilon$ satisfies $\whp_d\leq\epsilon<\whp_1$.
\end{ass}

\subsection{$p$-strategy: implicit regularization}\label{sec:p-strat}
For a given $\bm q$, we have the following characterization for the optimal strategy of $\bm p$.
\begin{theorem}\label{thm:opt-p-log}
    Under Assumption \ref{ass:ordering} and \ref{ass:epsilon}, let $\hat\imath=\max\{i:\whp_i>\epsilon\}$. Define $\hat{\bm w}\in\R^{\hat\imath}$ elementwise by $\hat{w}_i=\frac{\epsilon}{\log\whp_i/(\whp_i-\epsilon)}$. Then, for a given $\bm q$, the optimal $\bp$ for \eqref{eq:one-step-game} satisfies
    \begin{align*}
        \bm q^\top \log\bp=\min_{\bm p:d_\text{TV}(\bm p,\hp)\leq\epsilon}\bm q^\top \log \bm p=\begin{cases}
            \bm q^\top \log\hp-\epsilon\nm{\bm q_{1:\hat\imath}/\hat{\bm w}}_\infty,&\text{ if }q_i=0,\;\forall i>\hat\imath;\\
            -\infty,&\text{ otherwise}.
        \end{cases}
    \end{align*}
\end{theorem}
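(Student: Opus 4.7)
The plan is to split the argument on whether $\bm q$ is supported in $\{1,\dots,\hat\imath\}$. For the degenerate case where $q_i>0$ for some $i>\hat\imath$, I would exhibit a feasible $\bm p$ driving the objective to $-\infty$: set $p_i=0$, $p_1=\hat p_1+\hat p_i$, and $p_j=\hat p_j$ otherwise. Since $\hat p_i\leq\epsilon$ by the definition of $\hat\imath$, the TV distance equals $\hat p_i\leq\epsilon$, so this $\bm p$ is admissible, while $q_i\log p_i=-\infty$ forces $\bm q^\top\log\bm p=-\infty$.

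For the main case $q_i=0$ whenever $i>\hat\imath$, the first step is to parameterize $p_i=\hat p_i-a_i+b_i$ with $a_i,b_i\geq 0$ of disjoint supports, so that $\sum_i a_i=\sum_i b_i$ by mass conservation and $d_\text{TV}(\bm p,\hat{\bm p})=\sum_i a_i$. Writing $T=\{i:q_i>0\}\subseteq\{1,\dots,\hat\imath\}$, two monotonicity observations pin down the minimizer: raising $b_i$ for $i\in T$ strictly \emph{increases} $\bm q^\top\log\bm p$, and for $i\leq\hat\imath$ with $q_i=0$ the objective is insensitive to $(a_i,b_i)$. So I would set $b_i=0$ on $T$ and $a_i=0$ on $\{1,\dots,\hat\imath\}\setminus T$, and route the increases onto some index $j>\hat\imath$ (which exists since Assumption~\ref{ass:epsilon} forces $\hat\imath<d$). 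This collapses the problem to
\begin{align*}
\bm q^\top\log\hat{\bm p}-\max\Bigl\{\sum_{i\in T}\phi_i(a_i)\;:\;a_i\geq 0,\ \sum_{i\in T}a_i\leq\epsilon\Bigr\},\quad\text{where }\phi_i(a):=-q_i\log(1-a/\hat p_i).
\end{align*}

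The final step uses that each $\phi_i$ is convex and nondecreasing on $[0,\hat p_i)$, and since $\epsilon<\hat p_i$ for $i\leq\hat\imath$, the admissible $a$-region lies strictly inside the domain. A sum of convex functions is maximized at a vertex of the polytope $\{a\geq 0:\sum_i a_i\leq\epsilon\}$, whose vertices are $\bm 0$ and $\epsilon\bm e_{i^\star}$ for $i^\star\in T$. Thus the max equals $\max_{i\in T}\phi_i(\epsilon)=\max_{i\in T}q_i\log(\hat p_i/(\hat p_i-\epsilon))=\epsilon\max_{i\in T}(q_i/\hat w_i)=\epsilon\|\bm q_{1:\hat\imath}/\hat{\bm w}\|_\infty$, where the last equality uses $q_i/\hat w_i=0$ on $\{1,\dots,\hat\imath\}\setminus T$. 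This delivers the claimed closed form. The hardest point is this convex-on-polytope vertex argument — the structurally nontrivial observation that the worst-case adversary concentrates its entire TV budget on a single coordinate; the rest is bookkeeping (feasibility of the $b_j$ routing, positivity of the $\log$ arguments, and matching the infinity norm over $T$ with that over $\{1,\dots,\hat\imath\}$).
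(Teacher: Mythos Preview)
Your proposal is correct and follows essentially the same approach as the paper: both arguments hinge on the fact that a concave objective (equivalently, your convex $\sum\phi_i$) over a polytope attains its extremum at a vertex, so the adversary concentrates its full TV budget $\epsilon$ on a single coordinate $i$ with $\whp_i>\epsilon$, yielding $\max_i q_i\log(\whp_i/(\whp_i-\epsilon))=\epsilon\|\bm q_{1:\hat\imath}/\hat{\bm w}\|_\infty$. The only cosmetic difference is that the paper works directly in $\bm p$-space, citing the vertex structure of $N(\hp)$ (restricting to vertices of the form $\hp-\epsilon\bm e_i+\epsilon\bm e_j$ with $\whp_i>\epsilon$), whereas you first use monotonicity to strip away the $b_i$'s on $T$ and then apply the vertex argument in the reduced $a$-simplex; your route is arguably a touch more self-contained since it sidesteps the need to enumerate vertices of $N(\hp)$ when $\epsilon\geq\whp_d$.
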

We give several remarks on Theorem \ref{thm:opt-p-log}. 

\textbf{Tail truncation.} Due to the fact that $\lim_{x\downarrow0}\log(x)=-\infty$, any $\bm q$ possessing a non-zero tail in the index set $\{i:\whp_i\leq\epsilon\}$ will lead to a $-\infty$ objective value, because setting the corresponding $p_i$ to zero is within the reach of the adversary. This implies a hard truncation constraint $q_i=0\;\forall i\geq\hat\iota$, namely the optimal $\bm q$ must come without such a tail. However, more components can be additionally truncated in the maximization part, due to the $\ell_\infty$ regularization effect we derived.

\textbf{Tackling non-convexity.} The minimization problem in $\bm p$ is non-convex, because it has a concave objective. Therefore, it is inherently hard to characterize the structures of the optimal $\bp$: it may not even be unique. However, under TV distance, the geometry of the feasible region is a polytope. This benefits our analysis because the minimum is known to be attained at the vertices \citep{Horst1984}, restricting the candidate solutions within a finite set. Switching to other dissimilarity metrics than TV distance, such as KL divergence, would change the geometry and require very different techniques, which is left for future work.

\textbf{Implicit regularization.} With optimal $\bp$, the remaining part of the game is a regularized log-likelihood maximization in terms of $\bm q$:
\begin{align*}
    \max_{\bm q}\bm q^\top\log\hp-\epsilon\nm{\bm q_{1:\hat\imath}/\hat{\bm w}}_\infty,\quad\text{s.t. } q_i=0,\;\forall i>\hat\imath
\end{align*}
with an $\ell_\infty$-type regularization term $\nm{\bm q_{1:\hat\imath}/\hat{\bm w}}_\infty$. Note that we have $\hat{\bm w}\approx\hp_{1:\hat\imath}$ by applying first-order approximation to the function $\log(1+x)$. If no adversary is present ($\epsilon=0$), there is no regularization effect and trivially, greedy sampling solves the one-step log-likelihood maximization. This establishes an equivalence between regularization and robustness against an adversary, which has been observed in the robust optimization literature; see Section \ref{sec:robust-opt}.

\subsection{$q$-strategy: heuristic sampling methods}\label{sec:q-strat}
Now we present the optimal solution to the regularized maximization problem.
\begin{theorem}\label{thm:opt-q-log}
    Under Assumption \ref{ass:ordering} and \ref{ass:epsilon}, let $\hat\imath=\max\{i:\whp_i>\epsilon\}$, and $\hat{w}_i=\frac{\epsilon}{\log(\whp_i/(\whp_i-\epsilon))}$ for all $i\leq\hat\imath$. Define the threshold
    $$\hat I=\max\left\{I:\sum_{i=1}^{I-1}\hat{w}_i\log(\whp_i/\whp_I)\leq\epsilon,\;\whp_I>\epsilon\right\}.$$
    Then, the optimal $\bq$ for $\eqref{eq:one-step-game}$ is given elementwise by $$\tilde q_i\propto \hat{w}_i\mathds{1}_{(1 \leq i \leq \hat I)}.$$
\end{theorem}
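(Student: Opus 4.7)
By Theorem \ref{thm:opt-p-log}, solving \eqref{eq:one-step-game} reduces to the regularized maximization
\begin{align*}
    \max_{\bm q}\;\bm q^\top\log\hp-\epsilon\nm{\bm q_{1:\hat\imath}/\hat{\bm w}}_\infty,\qquad \text{s.t.}\quad \sum_i q_i = 1,\ q_i\geq0,\ q_i=0\ \forall i>\hat\imath.
\end{align*}
The plan is to linearize the $\ell_\infty$ term by introducing an epigraph variable $t$ with $t\geq q_i/\hat{w}_i$ for all $i\leq\hat\imath$, recasting the program as a joint maximization over $(\bm q,t)$ with box constraints $0\leq q_i\leq t\hat{w}_i$, the simplex constraint, and objective $\sum_{i\leq\hat\imath}q_i\log\whp_i-\epsilon t$.

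\textbf{Greedy structure for fixed $t$.} For each fixed $t$, this is a linear program in $\bm q$. Since $\log\whp_i$ is strictly decreasing in $i$ by Assumption \ref{ass:ordering}, the optimum is greedy: saturate $q_i=t\hat{w}_i$ in order $i=1,2,\dots$ until the total mass reaches $1$. This yields either a fully-saturated solution $q_i=t\hat{w}_i$ for $i\leq I$ and $0$ otherwise (requiring $t=1/\sum_{j=1}^I\hat{w}_j$), or a partially-saturated one with exactly one index partially filled. In the latter case, as $t$ varies within an interval between two consecutive fully-saturated values, the objective becomes affine in $t$, so its maximum lies at an endpoint; this reduces the problem to a one-dimensional search over $I\in\{1,\dots,\hat\imath\}$ with candidate solutions $\bm q$ satisfying $q_i\propto\hat{w}_i\mathds{1}_{(i\leq I)}$.

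\textbf{Choosing $\hat I$.} Substituting the candidates, the objective becomes $V(I)=\bigl(\sum_{i=1}^I\hat{w}_i\log\whp_i-\epsilon\bigr)\big/\sum_{i=1}^I\hat{w}_i$. Direct algebra shows that $V(I+1)-V(I)$ has the same sign as $\epsilon-\sum_{i=1}^I\hat{w}_i\log(\whp_i/\whp_{I+1})$; equivalently, $V$ is non-decreasing at step $J=I+1$ precisely when $\sum_{i=1}^{J-1}\hat{w}_i\log(\whp_i/\whp_J)\leq\epsilon$. The key monotonicity observation is that this left-hand side is non-decreasing in $J$, since its increment from $J$ to $J+1$ equals $\bigl(\sum_{i=1}^J\hat{w}_i\bigr)\log(\whp_J/\whp_{J+1})\geq 0$. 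Hence $V$ is unimodal on $\{1,\dots,\hat\imath\}$, and the defined $\hat I$---the largest $J$ for which the threshold condition holds with $\whp_J>\epsilon$---is its unique maximizer, giving $\tilde q_i\propto\hat{w}_i\mathds{1}_{(1\leq i\leq\hat I)}$.

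\textbf{Main obstacle.} The subtlest step is reducing the continuous $t$-optimization to the discrete choices $t=1/\sum_{j=1}^I\hat{w}_j$: one must carefully track how the active index set changes as $t$ crosses thresholds and verify that the piecewise-affine dependence on $t$ rules out any partial-saturation optimum. Once this reduction is established, the difference-of-ratios computation and the monotonicity argument that yields unimodality are both routine.
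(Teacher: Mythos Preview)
Your proof is correct and takes a genuinely different route from the paper's. The paper proves Theorem \ref{thm:opt-q-log} as a special case of the general Theorem \ref{thm: general-form} via Lagrangian duality: it writes down the Lagrangian, invokes strong duality, and verifies that the candidate $\tilde q_i \propto \hat w_i \mathds 1_{(i \le \hat I)}$ satisfies the KKT system by explicitly constructing dual variables $(\nu^*,\bm\lambda^*,\gamma_i)$; the ordering of the optimal $\bm q$ is handled separately by a rearrangement-inequality argument. Your approach is purely primal: the epigraph variable $t$ linearizes the $\ell_\infty$ term, the inner problem in $\bm q$ becomes a box-constrained LP with an immediate greedy solution, and the outer problem in $t$ is piecewise affine so only the breakpoints $t=1/\sum_{j\le I}\hat w_j$ matter; the difference-of-ratios computation and the monotonicity of $J\mapsto\sum_{i<J}\hat w_i\log(\hat p_i/\hat p_J)$ then give unimodality of $V(I)$ directly. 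This is more elementary and constructive---the form of the optimizer is derived rather than guessed and verified---whereas the paper's KKT machinery is set up to handle the general $f$-ODG uniformly, including the case where the regularizer is not a pure weighted $\ell_\infty$ norm and your epigraph trick would need modification. One small slip: Assumption \ref{ass:ordering} gives only $\hat p_1\ge\hat p_2\ge\cdots$, not strict inequality; the greedy argument still goes through since with ties one may saturate indices in order $1,2,\dots$ without loss.
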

\begin{coro}\label{coro:opt-q-log}
    By first-order approximation, $\hat{w}_i\approx\whp_i$, and hence $\tilde q_i\propto \whp_i\mathds{1}_{(1 \leq i \leq \hat I)}$, where
    $$
    \hat I = \max\left\{I:\sum_{i=1}^{I-1}\whp_i\log(\whp_i/\whp_I)\leq\epsilon,\;\whp_I>\epsilon\right\}.
    $$
    Thus, up to first-order approximation, a tail truncation-normalization sampling strategy is optimal.
\end{coro}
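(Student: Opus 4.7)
The plan is to derive Corollary \ref{coro:opt-q-log} as a direct Taylor-expansion consequence of Theorem \ref{thm:opt-q-log}, treating the dimensionless ratio $t_i := \epsilon/\whp_i$ as the small parameter. Under Assumption \ref{ass:epsilon} together with the retained-index condition $\whp_i > \epsilon$ that defines $i \leq \hat\imath$, we have $t_i \in (0,1)$, which is strictly inside the radius of convergence of the series for $-\log(1-\cdot)$, so the expansion is legitimate on exactly the set of indices that actually appears in the formulas of Theorem \ref{thm:opt-q-log}.

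First I would rewrite
\[
\hat{w}_i \;=\; \frac{\epsilon}{\log(\whp_i/(\whp_i-\epsilon))} \;=\; \whp_i\cdot\frac{t_i}{-\log(1-t_i)}.
\]
Since $-\log(1-t) = t + t^2/2 + t^3/3 + \cdots$, the correction factor obeys $t/(-\log(1-t)) = 1 - t/2 + O(t^2)$ as $t\to 0$, yielding
\[
\hat{w}_i \;=\; \whp_i - \tfrac{\epsilon}{2} + O\!\left(\epsilon^2/\whp_i\right) \;=\; \whp_i + O(\epsilon),
\]
which is the first claim $\hat{w}_i \approx \whp_i$. I would then substitute this into the two places where $\hat{w}_i$ appears in Theorem \ref{thm:opt-q-log}. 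In the formula $\tilde q_i \propto \hat{w}_i\mathds{1}_{(1 \leq i \leq \hat I)}$, replacing $\hat{w}_i$ by $\whp_i$ and renormalizing gives $\tilde q_i \propto \whp_i\mathds{1}_{(1 \leq i \leq \hat I)}$. In the definition of $\hat I$, the same substitution inside $\sum_{i=1}^{I-1}\hat{w}_i\log(\whp_i/\whp_I)$ reproduces exactly the threshold stated in the corollary; the auxiliary condition $\whp_I>\epsilon$ is unaffected because the retained index set $\{i:\whp_i>\epsilon\}$ does not depend on the approximation.

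The only step worth flagging, rather than a genuine technical obstacle, is the uniformity of the approximation near the boundary of the admissible set: the inequality $\sum_{i<I}\hat{w}_i\log(\whp_i/\whp_I)\leq\epsilon$ picks up an error of order $\epsilon\cdot I\log(\whp_1/\whp_I)$ when each $\hat{w}_i$ is replaced by $\whp_i$, so the approximate $\hat I$ may in principle differ from the exact one when this inequality is nearly tight at the cutoff index; away from such borderline configurations the two thresholds coincide, which is the precise sense of the qualifier ``up to first-order approximation.'' The final qualitative conclusion — that the optimal $\bq$ is a tail truncation-normalization rule of the form $q_i\propto\whp_i\mathds{1}_{(i\in\mathcal S)}$ with $\mathcal S=\{1,\dots,\hat I\}$ — then reads off directly from the resulting expression for $\tilde q_i$, matching the generic template identified for heuristic sampling methods in the introduction.
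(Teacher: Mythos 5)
Your proposal is correct and follows essentially the same route as the paper: the corollary is obtained exactly as you describe, by expanding $\log(\whp_i/(\whp_i-\epsilon))=-\log(1-\epsilon/\whp_i)\approx\epsilon/\whp_i$ to get $\hat{w}_i\approx\whp_i$ (the paper phrases this as a first-order approximation of $\log(1+x)$ in Section \ref{sec:p-strat}) and substituting into both the sampling weights and the threshold of Theorem \ref{thm:opt-q-log}. Your explicit error term $\hat{w}_i=\whp_i-\epsilon/2+O(\epsilon^2/\whp_i)$ and the caveat about borderline cutoff indices are a slightly more quantitative rendering of the paper's informal ``by first-order approximation,'' but they do not constitute a different argument.
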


Clearly, Theorem \ref{thm:opt-q-log} describes a sampling strategy that truncates tail probabilities and keeps only the subset $\{1,\dots,\hat I\}$ as the support. Note that $\hat I\leq\hat\imath$ always holds. The new distribution on the support $\bq_{1:\hat I}$ is \textit{not} a simple rescaling of the original weights $\hp_{1:\hat I}$ by a normalization constant, which is different from past heuristic designs. However, according to Corollary \ref{coro:opt-q-log}, rescaling emerges as a first-order approximation to the optimal strategy.

Under first-order approximation, the truncation threshold of $\bq$ has an information-theoretic interpretation. Note that if $I$ belongs to the support $\{1,\dots,\hat I\}$, then $ \sum_{i=1}^{I-1}\whp_i\log(\whp_i/\whp_I)\leq\epsilon $, which is equivalent to $$\log(1/\whp_I)\leq H(\hp_{1:I-1})+C_I,$$ where $H(\hp_{1:I-1}) $ is the entropy of $\hp_{1:I-1}$, and $C_I\coloneq \log\big(1/\sum_{i=1}^{I-1}\whp_i\big)+\epsilon/(\sum_{i=1}^{I-1}\whp_i) \approx \epsilon $ for large $I$.\footnote{Note that $\frac{I-1}{\min\{d,\, 1/\epsilon\}} \leq \sum_{i=1}^{I-1}\whp_i \leq 1$ by Assumption \ref{ass:ordering} and the fact that $\whp_i \geq \whp_I > \epsilon $ for all $i<I$.}
Thus, when constructing the support, for large $I$, we grow the existing support $\{1,\dots,I-1\}$ by adding $I$ if its self-information (surprisal) $\log(1/\whp_I)$ is small compared to the existing entropy plus a small amount. In other words, this token selection mechanism modulates the total number of surprisals. 

Notably, this truncation threshold is different from previous truncation-based methods like Nucleus sampling. However, one can still recover these strategies, say Nucleus sampling, by setting an appropriate $\epsilon$ that depends on $p$, the hyperparameter of Nucleus sampling, and also the distribution $\whp_i$. This involves an adaptive $\epsilon$ that changes at every decoding step. 

\subsection{Generalization from log-likelihood} \label{sec:gen-obj}
Beyond the log-likelihood, our analysis can be extended to games with more general objectives, taking the form
\begin{equation}
         \max_{\phantom{\hp}\bm q\phantom{\hp}}\min_{\bm p:d_\text{TV}(\bm p,\hp)\leq\epsilon} \bm q^{\top} f(\bm p),\tag{$f$-ODG}\label{pb:objective}
\end{equation}
where $f:\R\to\R$ is applied elementwise on $\bm p$. We assume the following for \eqref{pb:objective}.
\begin{ass}\label{ass:epsilon-f}
    $f$ is non-decreasing and concave. Moreover, $(\epsilon,f)$ satisfies either of the following conditions:
    \begin{itemize}[itemsep=0pt, topsep=0pt]
        \item [(i)] $\whp_d \leq \epsilon < \whp_1$, and $\lim_{x \downarrow 0} f(x) = -\infty;$
        \item [(ii)] $0 < \epsilon < \whp_d$, and $\sum_{i=1}^{d-1}\frac{f(\whp_i)-f(\whp_d+\epsilon)}{f(\whp_i)-f(\whp_i - \epsilon)} \geq 1$.
    \end{itemize}
\end{ass}
Clearly, our previous log-likelihood game \eqref{eq:one-step-game}, which uses $f(x) = \log(x)$, satisfies part (i) of this assumption. The following result establishes the solution to the general game \eqref{pb:objective}, and encompasses Theorem \ref{thm:opt-q-log} as a special case.

\begin{theorem}\label{thm: general-form}
Under Assumption \ref{ass:ordering} and \ref{ass:epsilon-f}, let $$S_I=\sum_{i=1}^{I-1} \frac{f(\whp_i)-f(\whp_I)}{f(\whp_i)-f\left(\whp_i-\epsilon\right)},$$
and define the threshold $\hat I=\max\left\{I:S_I\leq1,\;\whp_I>\epsilon\right\}$. Then, the optimal $\bq$ for \eqref{pb:objective} is given elementwise by
     \begin{equation}
         \tilde{q}_i\propto\frac{\epsilon}{f(\whp_i)-f\left(\whp_i-\epsilon\right)} \mathds{1}_{(1\leq i\leq \hat I)}.\label{general:optimal form}
     \end{equation}
\end{theorem}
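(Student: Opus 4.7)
The plan is to solve the minimax in two stages: first compute the adversary's optimal response to a fixed $\bm q$ in closed form, then solve the resulting outer maximization over $\bm q$. Since $f$ is non-decreasing and concave, $\bm q^\top f(\bm p)$ is concave in $\bm p$, so by \cite{Horst1984} its minimum over the polytope $N(\hp)\cap\Delta(\mathcal V)$ is attained at a vertex, just as in the proof of Theorem \ref{thm:opt-p-log}. This reduces the inner problem to a finite search over extreme ``attack patterns''.

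Next, I would argue that the adversary's optimal attack concentrates on a single index. Parametrize a vertex by $\bm p=\hp+\bm\delta$ with $\sum_i\delta_i=0$, $\|\bm\delta\|_1\leq 2\epsilon$, and $\delta_i\geq-\whp_i$. Since $f$ is concave, the per-index loss $\delta\mapsto f(\whp_i)-f(\whp_i-\delta)$ is convex in $\delta$, so on any reducer--receiver pair the budget is best spent at an extreme ($\delta\in\{0,\epsilon\}$, or $\delta=\whp_i$ at the boundary). Under condition (i), any index $i>\hat\imath$ with $q_i>0$ lets the adversary zero $p_i$ and produce $-\infty$ via $\lim_{x\downarrow 0}f(x)=-\infty$, forcing $q_i=0$ for $i>\hat\imath$. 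Under condition (ii), the second clause of Assumption \ref{ass:epsilon-f} will be leveraged to guarantee $\hat I<d$, so that a cost-free receiver $j\notin\mathrm{supp}(\bm q)$ always exists. Combining these, the inner value collapses to
\[
\min_{\bm p\in N(\hp)}\bm q^\top f(\bm p)=\bm q^\top f(\hp)-\max_{i\in\mathrm{supp}(\bm q)}q_i\, b_i,\qquad b_i:=f(\whp_i)-f(\whp_i-\epsilon).
\]

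For the outer problem $\max_{\bm q\in\Delta(\mathcal V)}\sum_i q_i a_i-\max_i q_i b_i$ with $a_i:=f(\whp_i)$, I would rewrite the inner maximum through its epigraph and apply KKT. Stationarity forces $\bq_i b_i=t$ for some constant $t$ on the support and $\bq_i=0$ otherwise, i.e., $\bq_i\propto 1/b_i$, which matches the claimed form once the constant $\epsilon$ is absorbed into the normalization. Since $a_i$ is non-increasing in $i$ by Assumption \ref{ass:ordering}, the KKT multipliers additionally force the support to be a prefix $\{1,\dots,I^\ast\}$, with $I^\ast$ the largest $I$ such that $a_I\geq m(I):=\big(\sum_{i\leq I}a_i/b_i-1\big)\big/T_I$, where $T_I=\sum_{i\leq I}1/b_i$. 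A direct algebraic rearrangement converts $a_I\geq m(I)$ into $S_I\leq 1$, identifying $I^\ast=\hat I$.

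I expect the main obstacle to be rigorously reducing the adversary's response to a single-index attack. Concavity of $f$ makes per-index losses convex and heuristically favors concentration, but multi-index vertices of the TV--simplex polytope must be systematically excluded, and condition (ii) is the more delicate case, since one must first use the assumed inequality involving $f(\whp_d+\epsilon)$ to force $\hat I<d$ and secure the existence of a cost-free receiver. Once the single-index reduction is established, the remaining linear-minus-max program is a clean convex problem whose KKT solution yields both the equalization $\bq_i b_i\equiv t$ on the support and the cutoff $\hat I$ through the identity $a_I\geq m(I)\Leftrightarrow S_I\leq 1$, completing the proof.
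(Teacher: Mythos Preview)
Your two-stage plan (vertex argument for the concave inner minimization, then KKT for the outer problem) is the same as the paper's, and your treatment of condition (i) matches the paper's Case~2 essentially verbatim: the $-\infty$ blowup forces $q_i=0$ on $\{i:\whp_i\le\epsilon\}$, and then the receiver $j$ can be taken there at zero cost, collapsing the inner value to $\bm q^\top f(\hp)-\max_i q_i b_i$.

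The gap is in condition (ii). Your claim that ``a cost-free receiver $j\notin\mathrm{supp}(\bm q)$ always exists'' is circular: showing $\hat I<d$ only tells you the \emph{candidate} $\tilde{\bm q}$ has a zero entry, not that every competitor $\bm q$ does. For a full-support $\bm q$ the adversary must add mass at some $j$ with $q_j>0$, so the correct inner value is
\[
\bm q^\top f(\hp)\;-\;\max_{i\neq j}\bigl\{\,q_i\,g^-(\whp_i)\;-\;q_j\,g^+(\whp_j)\,\bigr\},
\qquad g^\pm(x)=\pm\bigl(f(x\pm\epsilon)-f(x)\bigr),
\]
which is \emph{strictly larger} than your simplified $\bm q^\top f(\hp)-\max_i q_i b_i$. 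Hence the simplified outer problem you solve is only a lower envelope of the true one, and optimality of $\tilde{\bm q}$ against full-support competitors does not follow from it.

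The paper does not attempt this shortcut. In its Case~1 it keeps the $g^+$ term throughout the KKT system, and the inequality in condition~(ii) is used exactly once, to verify dual feasibility at the last coordinate: the requirement $\lambda_d^*\ge0$ becomes $\nu^*\ge f(\whp_d)+g^+(\whp_d)=f(\whp_d+\epsilon)$, which unwinds to $\sum_{i<d}\frac{f(\whp_i)-f(\whp_d+\epsilon)}{g^-(\whp_i)}\ge1$. That is the role of the second clause of Assumption~\ref{ass:epsilon-f}; your proposal invokes it only to conclude $\hat I<d$, which is necessary but not sufficient. To repair your route you would either have to carry the $g^+$ penalty into the KKT analysis as the paper does, or add a separate argument bounding the true inner value of every full-support $\bm q$ below $V(\tilde{\bm q})$.
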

\begin{coro}\label{coro:opt-q-gen}
  Suppose $f$ is also differentiable.  By first-order approximation on $f$, we have $\tilde q_i\propto\frac{1}{f'(\whp_i)}\mathds1_{(1\leq i\leq \hat I)}$, where 
    $$\hat I=\max\left\{I:\sum_{i=1}^{I-1} \frac{f(\whp_i)-f(\whp_I)}{f'(\whp_i)} \leq \epsilon,\;\whp_I>\epsilon\right\}.$$
\end{coro}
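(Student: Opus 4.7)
The plan is to take the exact characterization given by Theorem \ref{thm: general-form} and substitute the first-order Taylor expansion $f(\whp_i-\epsilon)=f(\whp_i)-\epsilon f'(\whp_i)+o(\epsilon)$ (valid whenever $\whp_i>\epsilon$, which is enforced by the indicator in \eqref{general:optimal form}). Because $f$ is concave and non-decreasing by Assumption \ref{ass:epsilon-f}, $f'(\whp_i)\geq 0$ and the secant slope $[f(\whp_i)-f(\whp_i-\epsilon)]/\epsilon$ is bracketed between $f'(\whp_i)$ and $f'(\whp_i-\epsilon)$, so the approximation is controlled.

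First, I would rewrite the denominator in \eqref{general:optimal form} as
\begin{equation*}
f(\whp_i)-f(\whp_i-\epsilon)=\epsilon f'(\whp_i)+o(\epsilon),
\end{equation*}
so that $\frac{\epsilon}{f(\whp_i)-f(\whp_i-\epsilon)}=\frac{1}{f'(\whp_i)}+o(1)$. Since $\tilde q_i$ in Theorem \ref{thm: general-form} is defined only up to the normalization encoded by the $\propto$ symbol, the leading term yields $\tilde q_i\propto\frac{1}{f'(\whp_i)}\mathds{1}_{(1\leq i\leq \hat I)}$, which is the first claim.

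Second, I would apply the same expansion inside the definition of $S_I$. Each summand becomes
\begin{equation*}
\frac{f(\whp_i)-f(\whp_I)}{f(\whp_i)-f(\whp_i-\epsilon)}=\frac{f(\whp_i)-f(\whp_I)}{\epsilon f'(\whp_i)}+o(1),
\end{equation*}
so multiplying the condition $S_I\leq 1$ through by $\epsilon$ transforms it into $\sum_{i=1}^{I-1}\frac{f(\whp_i)-f(\whp_I)}{f'(\whp_i)}\leq\epsilon$, which is exactly the threshold condition stated in the corollary. Combined with the side constraint $\whp_I>\epsilon$ (which is unchanged), this gives the approximated $\hat I$.

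I do not anticipate a serious obstacle here: the argument is a direct Taylor substitution, and the only subtlety worth noting is that the monotonicity of $I\mapsto S_I$ (which makes the ``$\max$'' well-defined) is preserved by the approximation, because concavity and monotonicity of $f$ force each approximate summand to inherit the same sign structure as in the exact form. If one wanted a fully quantitative statement, one could track the $o(\epsilon)$ remainder using a second-derivative bound on $f$, but for the purpose of stating the corollary it suffices to retain leading-order terms.
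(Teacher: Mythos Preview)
Your proposal is correct and matches the paper's approach exactly: the corollary is stated in the paper as an immediate consequence of Theorem~\ref{thm: general-form} obtained ``by first-order approximation on $f$,'' with no further proof given, and your Taylor substitution $f(\whp_i)-f(\whp_i-\epsilon)\approx\epsilon f'(\whp_i)$ applied to both \eqref{general:optimal form} and $S_I$ is precisely that step spelled out. There is nothing to add.
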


There are two interesting consequences of this general result. 

\textbf{Exclusive advantage of log.} Log-likelihood is the \textit{only} objective that makes it optimal to rescale the remaining probabilities by a normalization constant. This is because enforcing $\frac{1}{f'(x)}=x$ in Corollary \ref{coro:opt-q-gen} leads to $f(x)=\log (x)$ (up to a constant). Therefore, there is an exclusive connection between log-likelihood and rescaling.

\textbf{Temperature scaling.} Second, since rescaling is not necessarily optimal, it is worth understanding how we treat the remaining probabilities under another $f$. One example is $f(x)=\frac{x^{1-1/\tau}-1}{1-1/\tau}$, where $\tau\neq1$.\footnote{Note that $\lim_{\tau \to 1} \frac{x^{1 - 1/\tau} - 1}{1 - 1/\tau} = \log x$.} Taking the derivative, we have $\frac{1}{f'(\whp_i)}=\exp\big(\frac{\log\whp_i}{\tau}\big)=\whp_i^{\;1/\tau}$, hence recovering temperature sampling \citep{hinton2015distilling} where the temperature is controlled by $\tau$. This shows the general game $\eqref{pb:objective}$ is able to express a variety of sampling strategies.

\section{Experiments}\label{sec:exp}
Building on the truncation and normalization mechanism given in the general theory, we propose \textit{Game sampling} as outlined in Algorithm \ref{alg:game_sampling}, and empirically evaluate its performance in text generation. Regarding the algorithm design, the objectives to our concern are $f(x)=\frac{x^{1-1/\tau}-1}{1-1/\tau}$ and $f(x)=\log (x)$, as a special case of $\tau=1$. For better practical results, we relax the restrictions on the value of $\epsilon$ in Assumption \ref{ass:epsilon-f}.
\begin{algorithm} 
\caption{Game sampling}\label{alg:game_sampling}
\begin{algorithmic}
\State \textbf{Input:} $0 < \epsilon \leq 1$, and $\tau > 0$
\If{$\tau = 1$} \Comment{log-likelihood objective}
    \State compute $S_I = \sum_{i=1}^{I-1} \whp_i \log(\whp_i/\whp_I)$
    \State find $\hat I = \max\{I : S_I \leq \epsilon\}$ 
    \State set $q_i = \whp_i \mathds{1}_{(1 \leq i \leq \hat I)}$
\ElsIf{$\tau \neq 1$} \Comment{temperature sampling objective}
    \State compute $S_I = \left(1 - 1/\tau\right)^{-1} \sum_{i=1}^{I-1} \whp_i\left(1 - (\whp_i/\whp_I)^{1-1/\tau}\right)$
    \State find $\hat I = \max\{I : S_I \leq \epsilon\}$
    \State set $q_i = \whp_{i}^{1/\tau} \mathds{1}_{(1 \leq i \leq \hat I)}$
\EndIf
\State normalize $q_i$: $q_i \gets q_i/\sum_{j=1}^{d} q_j$
\State sample the next word based on the distribution $\bm q$
\end{algorithmic}
\end{algorithm}

We conduct an open-ended text generation task using web text from the \href{https://github.com/openai/gpt-2-output-dataset}{GPT-2 output dataset}. For each of the 5,000 articles in the Webtext test set, we use the first 35 tokens as prompts, with a maximum generation length of 256 tokens. For each type of GPT-2 model (Small, Medium, Large, XL) \citep{radford2019language}, GPT-J-6B \citep{gpt-j}, and Llama-2-7B \citep{Llama2}, we evaluate the following metrics:
\begin{enumerate}[itemsep=0pt, topsep=0pt]
    \item \textbf{Perplexity}: The perplexity of the generated text under the corresponding model.
    \item \textbf{Repetition frequency}: The fraction of generations with repetitions. A generation is considered repetitive if it contains at least two contiguous copies of the same phrase, of any length, at the token level.
    \item \textbf{MAUVE score} \citep{mauve:neurips2021}: for comparison with human-written text, we use the corresponding human continuations from the test set, up to a maximum of 256 tokens.
\end{enumerate}
A good performance is characterized by a high MAUVE score and close-to-human perplexity and repetition.

We compare seven decoding strategies: the proposed Game sampling (Algorithm \ref{alg:game_sampling}), Nucleus sampling \citep{holtzman2020curious}, Contrastive search \citep{su2022a}, Typical sampling \citep{meister2023locally}, Basis-Aware sampling (BA-$\eta$) \citep{finlayson2024closing}, Greedy sampling (using $\argmax_i\whp_i$), and Pure sampling (sampling $i$ with probability $\whp_i$). We use the best-performing hyperparameters for each strategy as determined by the MAUVE score.

The results are presented in Table \ref{tab:gpt}, with a more detailed breakdown in Appendix \ref{sec: additional_exp}. Game sampling achieves the best performance in GPT-J-6B, GPT-2 XL, Medium, and Small models, and scores the second highest in GPT-2 Large and Llama-2-7B models, next to Nucleus sampling and BA-$\eta$ respectively. We remark that BA-$\eta$ involves matrix decomposition and operates at a higher computation cost compared to our method.

\begin{table}[ht]
    \centering
    \begin{minipage}[b]{0.49\textwidth}
        \centering
        \resizebox{\textwidth}{!}{
        \begin{tabular}{lrrr}
            \toprule
             Method & Perplexity & Repetition & MAUVE \\
            \midrule
             Game ($\epsilon=0.95, \tau=2$) & \textbf{23.590} & \textbf{0.002} & \textbf{0.926} \\
             Nucleus ($p=0.9$) & 25.116 & 0.004 & 0.923 \\
             Contrastive ($\alpha=0.6$) &2.267 & 0.835 & 0.035 \\
             Typical ($p=0.9$) & 10.785 &0.019  & 0.888\\
             BA-$\eta$ ($\eta=0.0001$) & 29.610 & \textbf{0.002} & 0.917 \\
             Greedy & 2.163 & 0.886 & 0.0155 \\
             Pure & 62.001 & 0.0002 & 0.845 \\
             \textit{Human} & \textit{21.559} & \textit{0.002} & $-$ \\
            \bottomrule
        \end{tabular}
        }
        \caption*{GPT-2 Small}
    \end{minipage}
    \hfill
    \begin{minipage}[b]{0.49\textwidth}
        \centering
        \resizebox{\textwidth}{!}{
        \begin{tabular}{lrrr}
            \toprule
             Method & Perplexity & Repetition & MAUVE \\
            \midrule
             Game ($\epsilon=0.95, \tau=2$) & \textbf{17.499} & \textbf{0.002} & \textbf{0.945} \\
             Nucleus ($p=0.9$) & 19.221 & \textbf{0.002} & \textbf{0.945} \\
             Contrastive ($\alpha=0.6$) &2.431 & 0.677 & 0.049 \\
             Typical ($p=0.9$) & 9.018 & 0.010  & 0.923\\
             BA-$\eta$ ($\eta=0.0001$) &  24.119 &  0.0006 & 0.933 \\
             Greedy & 2.247 & 0.808 & 0.029 \\
             Pure & 48.553 & 0.0004 & 0.848 \\
             \textit{Human} & \textit{15.923} & \textit{0.002} & $-$ \\
            \bottomrule
        \end{tabular}
        }
        \caption*{GPT-2 Medium} 
    \end{minipage}

    \vspace{0.3cm} 

    \begin{minipage}[b]{0.49\textwidth}
        \centering
        \resizebox{\textwidth}{!}{
        \begin{tabular}{lrrr}
            \toprule
             Method & Perplexity & Repetition & MAUVE \\
            \midrule
             Game ($\epsilon=0.99, \tau=2.5$) & 15.458 & 0.001 & 0.947 \\
             Nucleus ($p=0.95$) & \textbf{13.699} & \textbf{0.002} & \textbf{0.954} \\
             Contrastive ($\alpha=0.6$) &4.448 & 0.006 & 0.892 \\
             Typical ($p=0.9$) & 6.590 & 0.012  & 0.924\\
             BA-$\eta$ ($\eta=0.0001$) &  15.223 &  0.0008 &  \textbf{0.954} \\
             Greedy & 2.169 & 0.760 & 0.039 \\
             Pure & 21.952 & 0.0008 & 0.931 \\
             \textit{Human} & \textit{13.755} & \textit{0.002} & $-$ \\
            \bottomrule
        \end{tabular}
        }
        \caption*{GPT-2 Large}
    \end{minipage}
    \hfill
    \begin{minipage}[b]{0.49\textwidth}
        \centering
        \resizebox{\textwidth}{!}{
        \begin{tabular}{lrrr} 
            \toprule
             Method & Perplexity & Repetition & MAUVE \\
            \midrule
             Game ($\epsilon=0.99, \tau=2$) & \textbf{11.333} & \textbf{0.003} & \textbf{0.958} \\
             Nucleus ($p=0.95$) & 14.589 & \textbf{0.003} & 0.955 \\
             Contrastive ($\alpha=0.6$) &5.235 & 0.006 & 0.912 \\
             Typical ($p=0.9$) & 7.342 & 0.011  & 0.932\\
             BA-$\eta$ ($\eta=0.0001$) &  13.495 & \textbf{0.001} & 0.946 \\
             Greedy & 2.411 & 0.672 & 0.065 \\
             Pure & 23.505 & 0.0004 & 0.942 \\
             \textit{Human} & \textit{12.319} & \textit{0.002} & $-$ \\
            \bottomrule
        \end{tabular}
        }
        \caption*{GPT-2 XL}
    \end{minipage}
    \hfill
        \begin{minipage}[b]{0.49\textwidth}
        \centering  \vspace{1em}
        \resizebox{\textwidth}{!}{
        \begin{tabular}{lrrr} 
            \toprule
             Method & Perplexity & Repetition & MAUVE \\
            \midrule
             Game ($\epsilon=0.99, \tau=2.5$) &19.729 & \textbf{0.002} & \textbf{0.833}  \\
             Nucleus ($p=0.99$) & 23.149 & \textbf{0.002} & 0.806 \\
             Contrastive ($\alpha=0.6$) & 6.573 & 0.007 & 0.715 \\
             Typical ($p=0.9$) & \textbf{8.747} & 0.016 & 0.769\\
             BA-$\eta$ ($\eta=0.0001$) & 12.307   &    \textbf{0.002} &0.826 \\     
             Greedy & 2.603 & 0.731 & 0.043 \\
             Pure & 27.327 & 0.001 & 0.798 \\
             \textit{Human} & \textit{9.950} & \textit{0.002} & $-$ \\
            \bottomrule
        \end{tabular}
        }
        \caption*{GPT-J-6B}
    \end{minipage}        \begin{minipage}[b]{0.49\textwidth}
        \centering  \vspace{1em}
        \resizebox{\textwidth}{!}{
        \begin{tabular}{lrrr} 
            \toprule
             Method & Perplexity & Repetition & MAUVE \\
            \midrule
             Game ($\epsilon=0.95, \tau=1.5$) &14.000 & 0.128 & 0.858  \\
             Nucleus ($p=0.95$) & 31.125 & 0.154 & 0.853 \\
             Contrastive ($\alpha=0.6$) & \textbf{5.375} & 0.236 & 0.702 \\
             Typical ($p=0.9$) & 5.156 & 0.209 & 0.719\\
             BA-$\eta$ ($\eta=0.0001$) &  8.375 &    \textbf{0.002} & \textbf{0.890} \\
             Greedy & 3.109 & 0.457 & 0.537 \\
             Pure & 44.500 & 0.016 & 0.813 \\
             \textit{Human} & \textit{6.469} & \textit{0.002} & $-$ \\
            \bottomrule
        \end{tabular}
        }
        \caption*{Llama-2-7B}
    \end{minipage}
    \caption{Evaluations on open-ended text generation with different decoding strategies. Boldface values indicate the highest MAUVE score and the closest-to-human perplexity and repetition. The best-performing hyperparameters are selected for each strategy.}
    \label{tab:gpt}
\end{table}

We also experiment with different values of $\epsilon$ and $\tau$, with details in Appendix \ref{sec: additional_exp}. In general, larger values of $\epsilon$ tend to produce better results in terms of higher MAUVE score, lower repetition frequency, and human-level perplexity. Smaller $\epsilon$ values reduce perplexity, but at the expense of more repetitions and lower MAUVE scores. In terms of $\tau$, the best performance for each $\epsilon$ choice was achieved at $\tau \approx 2$, yielding the highest MAUVE score. Similar to $\epsilon$, smaller $\tau$ values tend to reduce perplexity, but produce more repetitions with lower MAUVE scores. MAUVE score begins to decline when $\tau$ exceeds $2$.

\section{Conclusion}\label{sec:conclusion}
In this paper, we proposed Decoding Game, a two-player zero-sum game where a Strategist aims to maximize the log-likelihood of the generated text under the true probability measure, while an adversarial Nature seeks to distort the true measure within an $\epsilon$-error budget to degrade the text. After discussing the decomposibility of multi-step generation, we studied the optimal strategies for both players of the typical one-step Decoding Game. We proved that, as Nature enforces its optimal strategy, it imposes an $\ell_\infty$-type regularization on the log-likelihood maximization problem. By solving this regularized maximization in closed form, we identified tail truncation-normalization sampling as a first-order approximation to the optimal strategy. 

We also generalized our theory from log-likelihood to a broader class of objectives. In deriving the general solution, we observed that log-likelihood is the only objective that makes it optimal to rescale the remaining probabilities by a normalizing constant. Selecting other types of objectives leads to different ways of treating the remaining probabilities, including temperature sampling. Moreover, we empirically evaluated the performance of Game sampling, a sampling strategy built upon the general theory, in open-ended text generation.

We believe that Decoding Game provides comprehensive theoretical understanding for the heuristic design of sampling strategies, by rigorously establishing regularization effect and optimality results. The statistically meaningful motivation and minimal assumptions behind Decoding Game open up its potential for future research, both theoretical and practical, on text generation strategies. For example, it would be interesting to generalize the metric beyond TV distance. Also, our formulation of the multi-step game shares similarities with token-level Markov decision processes, and efficiently tackling multi-step strategy via reinforcement learning would be another direction.

\section*{Acknowledgements}

We extend our special thanks to the anonymous reviewers for their insightful comments and suggestions that greatly enhanced this work. We are also grateful to Danqi Chen for helpful discussions. Klusowski acknowledges financial support from the National Science Foundation through CAREER DMS-2239448.

\bibliography{iclr2025_conference}
\bibliographystyle{iclr2025_conference}
\newpage
\appendix
\section{Proofs}
\subsection{Proof of Proposition \ref{prop-greedy}}
For probability vectors $\bm q,\bm p,\hp\in\Delta(\mathcal V)$, define $\mathcal M(\bm q,\hp)=\min_{\bm p\in N(\hp)}\bm q^\top\log\bm p$, and $\overline{\mathcal M}(\hp)=\max_{\bm q}\min_{\bm p\in N(\hp)}\bm q^\top\log\bm p$. Then, the $t$-step total rewards of no-foresight strategy $\QQ(\widehat\PP)$ and locally optimal strategy $\widetilde\QQ(\widehat\PP)$ are respectively given by
\begin{align*}
&\LL^t(\QQ(\widehat\PP),\PP^*(\widehat\PP,\QQ))=\sum_{s=1}^t\EE_{X_{<s}\sim\QQ(\widehat\PP)}[\mathcal M(\bm q_s(X_{<s}),\hp_s(X_{<s}))]\coloneq\mathcal R^t(\QQ(\widehat\PP),\widehat\PP),\\
&\LL^t(\widetilde\QQ(\widehat\PP),\PP^*(\widehat\PP,\widetilde\QQ))=\sum_{s=1}^t\EE_{X_{<s}\sim\widetilde{\QQ}(\widehat\PP)}[\overline{\mathcal M}(\hp_s(X_{<s}))]\coloneq\widetilde{\mathcal R}^t(\widehat\PP).
\end{align*}
Since $\epsilon<\max_i\whp_i$, $\overline{\mathcal M}(\hp)$ is always bounded from below. Moreover, as the set-valued mapping $\hp\mapsto N(\hp)$ satisfies upper and lower hemicontinuity and $N(\hp)$ is compact, $\overline{\mathcal M}$ is continuous in $\hp$ by Berge's Maximum Theorem \citep{guide2006infinite}, which further implies the continuity of $\widetilde{\mathcal R}^t$. Since the space of $\widehat\PP$ is compact, we conclude that infimum of $\widetilde{\mathcal R}^t$ can be attained at some $\widehat\PP^*$, namely $\inf\widetilde{\mathcal R}^t(\widehat\PP)=\widetilde{\mathcal R}^t(\widehat\PP^*)$.

    Now, if $\bm q_t(x_{<t};\widehat\PP^*)=\bq_t(x_{<t};\widehat\PP^*)\;\forall t$, we are done. Otherwise, let $t_0$ be the first step such that $\bm q_{t_0}(x_{<t_0};\widehat\PP^*)\neq\bq_{t_0}(x_{<t_0};\widehat\PP^*)$. We have
    \begin{align*}
        \sum_{s=1}^{t_0-1}\EE_{X_{<s}\sim\QQ(\widehat\PP^*)}[\mathcal M(\bm q_s(X_{<s}),\hp_s^*(X_{<s}))]&=\sum_{s=1}^{t_0-1}\EE_{X_{<s}\sim\widetilde\QQ(\widehat\PP^*)}[\mathcal M(\bq_s(X_{<s}),\hp_s^*(X_{<s}))],\\
        \EE_{X_{<t_0}\sim\QQ(\widehat\PP^*)}[\mathcal M(\bm q_{t_0}(X_{<{t_0}}),\hp_{t_0}^*(X_{<{t_0}}))]&\leq\EE_{X_{<t_0}\sim\widetilde\QQ(\widehat\PP^*)}[\mathcal M(\bq_{t_0}(X_{<t_0}),\hp_{t_0}^*(X_{<t_0}))],
    \end{align*}
    which implies $\mathcal R^{t_0}(\QQ(\widehat\PP^*),\widehat\PP^*)\leq\widetilde{\mathcal R}^{t_0}(\widehat\PP^*)$.
    Consider $\widehat\PP^{**}$ defined as follows. For each $x_{<s}\in\mathcal V^{s-1}$,
    \begin{align*}
    \hp_s^{**}(x_{<s})=\begin{cases}
        \hp_s^*(x_{<s}),&s\leq t_0,\\
        \hp_s^*(x_{<s}^*)\text{ where }x_{<s}^*=\argmin_{x\in\mathcal V^{s-1}}\mathcal M(\bq_s(x),\hp^*_s(x)),&s>t_0.
    \end{cases}
    \end{align*}
    In words, $\widehat\PP^{**}$ can be understood as shifting the future structure of $\widehat\PP^*$ after $t_0$. Since the strategy $\QQ(\widehat\PP)$ is defined to have no foresight, we have $\bm q_s(x_{<s};\widehat\PP^{**})=\bm q_s(x_{<s};\widehat\PP^*)$ for $s\leq t_0$. Hence, 
    \begin{align}
    \mathcal R^{t_0}(\QQ(\widehat\PP^{**}),\widehat\PP^{**})\leq\widetilde{\mathcal R}^{t_0}(\widehat\PP^{*})\label{eq:greedy-1}
    \end{align}
    holds as well. 
    
    Due to our construction of $\widehat\PP^{**}$, the future rewards after $t_0$ satisfy
    \begin{align*}
        \sum_{s=t_0+1}^T\EE_{X_{<s}\sim\QQ(\widehat\PP^{**})}[\mathcal M(\bm q_s(X_{<s}), \hp_s^{**}(X_{<s}))]&\leq\sum_{s=t_0+1}^T\max_{x_{<s}\in\mathcal V^{s-1}}\mathcal M(\bm q_s(x_{<s}),\hp_s^{**}(x_{<s}))\\
        &\leq\sum_{s=t_0+1}^T\max_{x_{<s}\in\mathcal V^{s-1}}\mathcal M(\bq_s(x_{<s}),\hp_s^{**}(x_{<s}))\\
        &\leq\sum_{s=t_0+1}^T\EE_{X_{<s}\sim\widetilde\QQ(\widehat\PP^*)}[\mathcal M(\bq_s(X_{<s}),\hp_s^*(X_{<s}))],
    \end{align*}
    namely
    \begin{align}
    \mathcal R^{T}(\QQ(\widehat\PP^{**}),\widehat\PP^{**})-\mathcal R^{t_0}(\QQ(\widehat\PP^{**}),\widehat\PP^{**})\leq\widetilde{\mathcal R}^{T}(\widehat\PP^{*})-\widetilde{\mathcal R}^{t_0}(\widehat\PP^{*}).\label{eq:greedy-2}
    \end{align}
    With \eqref{eq:greedy-1} and \eqref{eq:greedy-2}, we conclude that $$
    \inf_{\widehat\PP}\mathcal R^{T}(\QQ(\widehat\PP),\widehat\PP)\leq\mathcal R^{T}(\QQ(\widehat\PP^{**}),\widehat\PP^{**})\leq\widetilde{\mathcal R}^{T}(\widehat\PP^{*})=\inf_{\widehat\PP}\widetilde{\mathcal R}^T(\widehat\PP),
    $$
    which proves the result.

\subsection{Proof of Theorem \ref{thm: general-form}} \label{proof:thm general}
We shall only prove the general theorem, as Theorem \ref{thm:opt-p-log} and \ref{thm:opt-q-log} are direct consequences.

Consider the minimization problem 
\begin{align}
\min_{\bm p \in N(\hat{\bm p} )} \bm q^{\top} f(\bm p),\label{problem:min}
\end{align}
where $N(\hat {\bm p})=\{\bm p \in \Delta(\mathcal V) : d_\text{TV}(\bm p,\bm q) \leq \epsilon\}.$

The feasible region $N(\hat{\bm p})$ is a convex polytope since it is the intersection of two convex polytopes---the probability simplex $\Delta(\mathcal V)$ and the $\epsilon$-TV-distance ball $\{\bm p : \frac{1}{2}\nm{\bm p-\hat{\bm p}}_{1} \leq \epsilon \}$. Moreover, due to concavity of $f$, it is easy to show that $\bm q^{\top}f(\bm p)$ is concave in $\bm p$. It is well-known that minimizers of a concave function over a polytope are attained at one of the vertices \citep{Horst1984}. Now, we let $\mathcal U$ be the set of the vertices of $N(\hat{\bm p})$.

We will consider the two cases of the theorem separately, due to their differences in the geometry of the feasibility.

\underline{\textbf{Case 1}: $\epsilon< \hat p_d,$ and $\sum_{i=1}^{d-1}\frac{f(\hat{p}_i)-f(\hat{p}_d+\epsilon)}{f(\hat{p}_i)-f(\hat{p}_i - \epsilon)} \geq 1$.}

Since $\epsilon < \hat{p}_d,$ the set $\mathcal U$ can be written as 
$\mathcal U=\{\hat{\bm p}-\epsilon \bm e_i + \epsilon \bm e_j:\ i \neq j \}.$ Hence, we have 
\begin{align*}
    \min_{\bm p \in N(\hat{\bm p})} \bm q^{\top} f(\bm p) &=\min_{\bm p \in\mathcal U} \bm q^{\top} f(\bm p) \\
    &=\bm q^{\top} f(\hat{\bm p}) + \min_{i,j:i\neq j} \left\{ q_i \left(f(\hat{p}_i - \epsilon)-f(\hat{p}_i) \right) + q_j \left( f(\hat{p}_j+\epsilon) -f(\hat{p}_j) \right) \right\}\\
    &=\bm q^{\top} f(\hat{\bm p}) - \max_{i,j:i\neq j} \left\{ q_i g^{-}(\hat{p}_i)  - q_j g^{+}(\hat{p}_i) \right\},
\end{align*}
where $g^{-}(x) \coloneq f(x) - f(x-\epsilon),$ and $g^{+}(x)\coloneq f(x+\epsilon) -f(x).$ Taking this result into our game, the remaining $\bm q$-maximization part is equivalent to
\begin{align}
\min_{\bm q \in \Delta(\mathcal V)} \left[ -\bm q^{\top} f(\hat{\bm p}) + \max_{i,j:i \neq j} \left\{ q_i g^{-}(\hat{p}_i) - q_j g^{+}(\hat{p}_i) \right\} \right].\label{eq:app-q-max}
\end{align}
\textbf{Ordering of the optimal solution}. We claim that any optimal $\bm q^*$ has ordered elements, with $q^*_1\geq\dots\geq q^*_d$. Observe that both $g^{+}$ and $g^{-}$ are non-increasing, since $f$ is a concave and non-decreasing function. Therefore, if a $\bm q$ has unordered elements, we can rearrange its elements it in descending order, and rearrangement inequality \citep{hardy1952inequalities} implies that that the term $-\bm q^{\top} f(\hat{\bm p})$ will decrease. Moreover, by reordering, the term $\max_{i,j:i \neq j} \left\{ q_i g^{-}(\hat{p}_i) - q_j g^{+}(\hat{p}_i) \right\}$ will also decrease. This is because
\begin{align*}
\max_{i \neq j} \left\{ q_i g^{-}(\hat{p}_i) - q_j g^{+}(\hat{p}_j) \right\} &= \max_i \left\{ q_i g^{-}(\hat{p}_i) - \min_{j:j \neq i} q_j g^{+}(\hat{p}_j) \right\} \\&= \max_j \left\{ \max_{i:i \neq j} q_i g^{-}(\hat{p}_i) - q_j g^{+}(\hat{p}_j) \right\},
\end{align*}
Thus, for any fixed $i$, if we reorder the rest of the elements, $\min_{j \neq i} q_j g^{+}(\hat{p}_j)$ will increase, making the entire term smaller. Further, by fixing $j$ and reordering by placing $q_i$ in the correct position, $\max_{i \neq j} q_i g^{-}(\hat{p}_i)$ will decrease. In total, rearranging $\bm q$ in descending order will decrease both terms, resulting in a lower overall objective.

\textbf{Analyzing KKT optimality}. Introducing dual variables $\bm\lambda\in\R^d_+,\nu\in\R$, the Lagrangian of \eqref{eq:app-q-max} is given by
\begin{align*}
    L(\bm q, \bm \lambda, \nu) \coloneq -\bm q^{\top} f(\hp) + \max_{i,j:i \neq j} \left\{ q_i g^{-}(\hat{p}_i) - q_j g^{+}(\hat{p}_j) \right\} - \bm \lambda^{\top} \bm q + \nu \left(\sum_{i=1}^d q_i - 1 \right).
\end{align*}

One can check that the objective in \eqref{eq:app-q-max} is convex in $\bm q$. Moreover, since there exists $\tilde{\bm q} \in \mathrm{relint}(\Delta(\mathcal V))$ with $\tilde{\bm q} > 0$, strong duality holds. Therefore, $\bm q^*$ is optimal if and only if there exists $\bm\lambda^*,\nu^*$ such that the following Karush-Kuhn-Tucker (KKT) conditions are satisfied \citep{boyd2004convex}:
\begin{align}
    & \bm 0 \in -f(\hat{\bm p}) + \partial\left(\max_{i,j:i \neq j} \left\{ q_i^* g^{-}(\hat{p}_i) - q_j^* g^{+}(\hat{p}_j) \right\} \right) - \bm \lambda^* + \nu^* \bm 1, \tag{first-order stationarity} \\
    & \bm q^* \in \Delta(\mathcal V), \quad \bm \lambda^* \geq 0, \tag{primal-dual feasibility} \\
    & \lambda_i^* q_i^* = 0 \quad \forall i, \tag{complementary slackness}
\end{align}

where the subdifferential $\partial$ \citep{rockafellar1970convex} of the nonsmooth function inside represents the convex hull of the subgradients of the maximizing coordinates, given by
\begin{gather*}
    \partial\left(\max_{i \neq j} \left\{ q^*_i g^{-}(\hat{p}_i) - q^*_j g^{+}(\hat{p}_j) \right\}\right)=\mathrm{conv}\left(\mathcal D\right), \\
    \mathcal D=\left\{ g^{-}(\hat{p}_i)\bm e_i-g^{+}(\hat{p}_j) \bm e_j : i \neq j,\;q^*_i g^{-}(\hat{p}_i) - q_j^* g^{+}(\hat{p}_j) = \max_{i,j:i \neq j} \left\{ q_i^* g^{-}(\hat{p}_i) - q_j^* g^{+}(\hat{p}_j) \right\} \right\}.
\end{gather*}

Now we show that $\bm q^*$ defined by $q^*_i=\frac{c}{g^{-}(\hat p_i)} \mathds{1}_{(1 \leq i \leq I^*)}$ satisfies KKT conditions for some dual variables $\bm\lambda^*,\nu^*$, where $c$ is a normalizing constant.  Let 
$$\mathcal{J} \coloneq \{i: q^*_i g^{-}(\hat p_i) = c\}= \{1\leq i \leq I^*\},$$
$$\mathcal{N} \coloneq \{i : q^*_i g^{+}(\hat p_i)=0\} = \{I^*<i \leq d\}.$$
Then, as $S_{I}$ is non-decreasing in $I$, we have
\begin{equation}
    \sum_{k = 1}^{I^*-1} \frac{f(\hat p_k)-f(\hat p_i)}{g^{-}(\hat p_k)} \leq 1, \quad\forall i \in \mathcal J, \label{eq: elements in J}
\end{equation}
and 
\begin{equation}
    \sum_{k = 1}^{I^*-1} \frac{f(\hat p_k)-f(\hat p_i)}{g^{-}(\hat p_k)} > 1, \quad\forall i \in \mathcal N. \label{eq: elements in N}
\end{equation}
Moreover, since
\begin{align*}
    S_d   = \sum_{k=1}^{d-1}\frac{f(\hat p_k)-f(\hat p_d)}{g^{-}(\hat p_k)} > \sum_{k=1}^{d-1}\frac{f(\hat p_k)-f(\hat p_d+\epsilon)}{g^{-}(\hat p_k)} \geq 1,
\end{align*}
we know that $I^* < d$ must hold, and $\mathcal{N}$ is always non-empty. 

To show that KKT conditions are satisfied, it is equivalent to prove that there exist $\nu^*$, $\bm \lambda^* \geq 0$ with $\lambda_i^* = 0$ for $i \in \mathcal{J}$, and coefficients $\gamma_{ij} \geq 0$ for $(i,j) \in \mathcal{J} \times \mathcal{N}$ with $\sum_{i \in \mathcal{J}} \sum_{j \in \mathcal{N}} \gamma_{ij} = 1$ such that 
\begin{align*}
    -f(\hat p_i) + g^{-}(\hat p_i)\left(\sum_{j \in \mathcal{N}} \gamma_{ij}\right) \mathds{1}_{(i \in \mathcal{J})} - g^{+}(\hat p_i)\left(\sum_{j \in \mathcal{J}} \gamma_{ji}\right) \mathds{1}_{(i \in \mathcal{N})} - \lambda^*_i \mathds{1}_{(i \in \mathcal N)} + \nu^* =0, 
\end{align*}
which is equivalent to 
\begin{align}
    -f(\hat p_i) + g^{-}(\hat p_i)\left(\sum_{j \in \mathcal{N}} \gamma_{ij}\right) + \nu^* &=0, \quad i \in \mathcal{J}  \label{eq KKT : 1},\\
    -f(\hat p_i) - g^{+}(\hat p_i)\left(\sum_{j \in \mathcal{J}} \gamma_{ji}\right) + \nu^* &= \lambda_i^* \geq 0, \quad i \in \mathcal{N}.\label{eq KKT : 2}
\end{align}

The above linear system is satisfied for 
\begin{align*}
\nu^* &= \left(\sum_{k \in \mathcal J} \frac{1}{g^{-}(\hat p_k)}\right)^{-1} \left( \sum_{k \in \mathcal J} \frac{f(\hat p_k)}{g^{-}(\hat p_k)} - 1 \right),\\
\gamma_{ij} &= \frac{f(\hat p_i) - \nu^*}{g^{-}(\hat p_i)} \mathds{1}_{(j=d)},\\
\lambda^*_i &= \left(-f(\hat p_i) - g^{+}(\hat p_d)\mathds{1}_{(i=d)} + \nu^*\right) \mathds{1}_{(i \in \mathcal{N})}.
\end{align*}

Moreover, \eqref{eq: elements in J} and \eqref{eq: elements in N} respectively imply that $\gamma_{ij} \geq 0$ and $\lambda^*_i \geq 0$ for all $I^* < i < d$. We also have $\lambda^*_d \geq 0$ because
$$
\sum_{k=1}^{d-1} \frac{f(\hat p_k) - f(\hat p_d)-g^{+}(\hat p_d)}{g^{-}(\hat p_k)}=\sum_{k=1}^{d-1} \frac{f(\hat p_k) - f(\hat p_d + \epsilon)}{g^{-}(\hat p_k)} \geq 1.
$$

Therefore, the above choices of $\nu^*$, $\gamma_{ij}$, and $\lambda^*$ satisfy the linear system and all constraints. Thus, $(\bm q^*, \bm \lambda^*, \nu^*)$ satisfy the KKT conditions, and hence $\bm q^*$ is the optimal solution to problem \eqref{pb:objective}.

\underline{\textbf{Case 2}: $\hat p_d \leq \epsilon < \hat p_1,$ and $\lim_{x \downarrow 0}f(x) = -\infty$.}

 Let $\mathcal A = \{i : \hat{p}_i \leq \epsilon\}$ and $\mathcal Q = \{\bm q \in \Delta(\mathcal V) : q_i = 0 \ \forall i \in\mathcal A\}$. Suppose we use some strategy ${\bm q} \notin \mathcal Q$, i.e., there is some $j \in \mathcal A$ such that $q_j \neq 0$. Since $\lim_{x \downarrow 0}f(x) = -\infty$, the adversary can always find ${\bm p} = \hat{\bm p} - \hat{p}_j \bm e_j$ that makes the objective $-\infty$. Thus, an optimal strategy must come from $\mathcal Q$. Similar to Case 1, the $\bm p$-minimization part can be written in terms of the vertex set $\mathcal U$ as follows:
 \begin{align}
  \min_{\bm p \in N(\hat{\bm p})} \bm q^{\top} f(\bm p) &= \min_{\bm p \in \mathcal U} \bm q^{\top} f(\bm p)\nonumber\\
  &= \min_{\bm p \in \mathcal U_{\mathcal A}} \bm q^{\top} f(\bm p)\nonumber\\
  &= \bm q^{\top} f(\hat{\bm p}) + \min_{(i,j) \in\mathcal C} \left\{ q_i \left(f(\hat{p}_i - \epsilon) - f(\hat{p}_i)\right) + q_j \left( f(\hat{p}_j + \epsilon) - f(\hat{p}_j) \right) \right\} \nonumber\\
  &= \bm q^{\top} f(\hat{\bm p}) - \max_{(i,j) \in\mathcal C} \left\{ q_i g^{-}(\hat{p}_i)  - q_j g^{+}(\hat{p}_i) \right\} \nonumber\\ 
  &= \bm q^{\top} f(\hat{\bm p}) - \max_{i \notin \mathcal A} q_i g^{-}(\hat{p}_i) \label{eq:ddag},
\end{align}
where $\mathcal U_{\mathcal A} = \{\hat{\bm p} - \epsilon \bm e_i + \epsilon \bm e_j:i\neq j,i\notin\mathcal A\}$, and $\mathcal C = \{(i,j):i \neq j, i\notin\mathcal A \}$. \eqref{eq:ddag} follows because $q_j = 0$ for any $j \in \mathcal A$.
Thus, the problem of interest is equivalent to
$$
\min_{\bm q \in \mathcal Q} \left[ -\bm q^{\top} f(\hat{\bm p}) + \max_{i \notin \mathcal A} q_i g^{-}(\hat{p}_i) \right].
$$
In other words, we only need to solve $\bm q^*$ from a lower-dimensional problem
$$
\min_{\bm q \in \Delta(\mathcal V_{\mathcal A})} \left[ -\bm q^{\top} f(\hat{\bm p}) + \max_{i} q_i g^{-}(\hat{p}_i) \right],
$$
where $\mathcal V_{\mathcal A}$ is a truncated vocabulary with $|\mathcal V_{\mathcal A}| = d - |\mathcal A|$.

\textbf{Ordering of the optimal solution}. Similar to Case 1, an optimal $\bm q^*$ is ordered with $q^*_1 \geq\dots\geq q^*_d$. 

\textbf{Analyzing KKT optimality}. The Lagrangian can be similarly defined as
\begin{align*}
L(\bm q, \bm \lambda, \nu) \coloneq -\bm q^{\top} f(\hat{p}) + \max_{i} q_i g^{-}(\hat{p}_i) - \bm \lambda^{\top} \bm q + \nu \left(\sum_{i=1}^{d-|\mathcal A|} q_i - 1 \right),
\end{align*}
and strong duality holds as well. The KKT conditions are
\begin{align}
    &\bm 0 \in -f(\hat{\bm p}) + \partial\left(\max_i q^*_i g^{-}(\hat{p}_i)\right) - \bm \lambda^* + \nu^* \bm 1, \tag{first-order stationarity}\\
    &\bm q^* \in \Delta({\mathcal V_{\mathcal A}}), \quad \bm \lambda^* \geq 0, \tag{primal-dual feasibility}\\
    &\lambda_i^* q_i^* = 0 \quad \forall i, \tag{complementary slackness}
\end{align}
where $\partial\left(\max_i q^*_i g^{-}(\hat{p}_i)\right) \coloneq \mathrm{conv}\left(\{g^{-}(\hat{p}_i) \bm e_i : q^*_i g^{-}(\hat{p}_i) = \max_{i}q^*_i g^{-}(\hat{p}_i)\}\right)$. Let 
$$\mathcal J = \{i : q^*_i g^{-}(\hat{p}_i) = c\} = \{1 \leq i \leq I^*\}, \quad \mathcal N = \{i : q^*_i g^{-}(\hat{p}_i) = 0\} = \{I^* < i \leq d-|\mathcal A|\},$$ 
where $c \coloneq \max_i q^*_i g^{-}(\hat{p}_i)$. It is sufficient to show that there exist $\nu^*$, $\bm \lambda^* \geq 0$ with $\lambda_i^* = 0$ for $i \in \mathcal{J}$, and coefficients $\gamma_{i} \geq 0$ for $i \in \mathcal{J}$ with $\sum_{i \in \mathcal{J}} \gamma_{i} = 1$, such that
$$
-f(\hat{p}_i) + \gamma_i g^{-}(\hat{p}_i) \mathds{1}_{\{i \in \mathcal{J}\}} - \lambda^*_i \mathds{1}_{\{i \in \mathcal{N}\}} + \nu^* = 0.
$$
This is achieved by setting
\begin{align*}
\nu^* &= \left(\sum_{k \in \mathcal J} \frac{1}{g^{-}(\hat p_k)}\right)^{-1} \left( \sum_{k \in \mathcal J} \frac{f(\hat p_k)}{g^{-}(\hat p_k)} - 1 \right),\\
\gamma_{i} &= \frac{f(\hat p_i) - \nu^*}{g^{-}(\hat p_i)} \geq 0, \quad \text{for} \ i \in \mathcal{J},\\
\lambda^*_i &= \left(\nu^* - f(\hat p_i)\right) \mathds{1}_{(i \in \mathcal{N})} \geq 0.
\end{align*}
Moreover, $\gamma_i \geq 0$ and $\lambda^*_i \geq 0$ follow from the fact that $S_{I} \leq 1\;\forall I \in \mathcal J$ and $S_{I} > 1\;\forall I \in \mathcal{N}$, respectively.

\section{Additional experiments} \label{sec: additional_exp}
In Tables \ref{tab:gpt-hyperparameters} and \ref{tab2:gpt-hyperparameters}, we present additional experimental results obtained using various choices of $\epsilon$ and $\tau$ in Game sampling algorithm. These experiments provide further insights into the performance and sensitivity of the model under different parameter settings. We also explored different values of $\epsilon \in \{0.1, 0.3, 0.5, 0.8, 0.9\}$ alongside different $\tau$ values. However, since the best performance was consistently achieved with $\epsilon = 0.95$ or $\epsilon = 0.99$, we report only those values here to highlight the effect of changing $\tau$.

As part of this evaluation, we also analyzed the point at which probabilities are truncated and renormalized in Game sampling and Nucleus sampling for a randomly selected article from the WebText test set, using the GPT-2 XL model. The GPT-2 model has a total vocabulary size of 50,000 tokens, so truncating the probability distribution can significantly reduce the set of candidate words for the next token. Figures \ref{fig:(one_token)} and \ref{fig:(35_token)} illustrate how these sampling strategies truncate the probability distribution. Figure \ref{fig:(one_token)} shows the distribution for the next word when using only 1 token as context, along with the index where probabilities are truncated and set to zero. In contrast, Figure \ref{fig:(35_token)} presents the distribution for the next word when using the first 35 tokens as context, providing more information for the model to generate the next word. With more context, the model is expected to be more certain about the next word, and the figure highlights the corresponding truncation points. Notably, Game sampling truncates a substantial portion of the 50,000-token distribution and dynamically adjusts the cutoff point based on the shape of the distribution (see Algorithm \ref{alg:game_sampling}).

\begin{table}[htbp]
    \centering
    \begin{minipage}[b]{0.49\textwidth}
        \centering
        \resizebox{\textwidth}{!}{
         \begin{tabular}{llrrr}
                \toprule
        $\epsilon$ & $\tau$ & Perplexity & Repetition & MAUVE \\
        \midrule
        0.95 & 1.0 & 6.874 & 0.087 & 0.739 \\
        0.95 & 1.1 & 7.960 & 0.058 & 0.809 \\
        0.95 & 1.5 & 13.336 & 0.015 & 0.898 \\
        0.95 & 2.0 & 23.592 & 0.003 & 0.926 \\
        0.95 & 2.5 & 40.129 & 0.002 & 0.908 \\
        0.95 & 3.0 & 66.481 & 0.001 & 0.815 \\
        0.95 & 3.5 & 107.544 & 0.001 & 0.699 \\
        0.95 & 4.0 & 172.822 & 0.001 & 0.474 \\
        \midrule
        0.99 & 1.0 & 7.067 & 0.081 & 0.746 \\
        0.99 & 1.1 & 8.275 & 0.055 & 0.820 \\
        0.99 & 1.5 & 14.231 & 0.012 & 0.897 \\
        0.99 & 2.0 & 26.783 & 0.002 & 0.917 \\
        0.99 & 2.5 & 48.508 & 0.002 & 0.864 \\
        0.99 & 3.0 & 89.308 & 0.001 & 0.745 \\
        0.99 & 3.5 & 161.402 & 0.001 & 0.529 \\
        0.99 & 4.0 & 296.453 & 0.001 & 0.273 \\
        \bottomrule
        \end{tabular}
        }
        \caption*{GPT-2 Small}
    \end{minipage}
    \hfill
    \begin{minipage}[b]{0.49\textwidth}
        \centering
        \resizebox{\textwidth}{!}{
 \begin{tabular}{llrrr}
        \toprule
        $\epsilon$ & $\tau$ & Perplexity & Repetition & MAUVE \\
        \midrule
        0.95 & 1.0 & 6.067 & 0.048 & 0.858 \\
        0.95 & 1.1 & 6.804 & 0.037 & 0.883 \\
        0.95 & 1.5 & 10.423 & 0.010 & 0.926 \\
        0.95 & 2.0 & 17.499 & 0.003 & 0.945 \\
        0.95 & 2.5 & 28.738 & 0.001 & 0.919 \\
        0.95 & 3.0 & 46.973 & 0.001 & 0.858 \\
        0.95 & 3.5 & 78.152 & 0.001 & 0.721 \\
        0.95 & 4.0 & 132.77 & 0.001 & 0.475 \\
        \midrule
        0.99 & 1.0 & 6.176 & 0.047 & 0.845 \\
        0.99 & 1.1 & 6.947 & 0.033 & 0.879 \\
        0.99 & 1.5 & 11.019 & 0.008 & 0.941 \\
        0.99 & 2.0 & 19.482 & 0.002 & 0.938 \\
        0.99 & 2.5 & 34.662 & 0.002 & 0.911\\
        0.99 & 3.0 & 63.555 & 0.001 & 0.792 \\
        0.99 & 3.5 & 120.889 & 0 & 0.497 \\
        0.99 & 4.0 & 243.844 & 0 & 0.257 \\
        \bottomrule
    \end{tabular}
        }
        \caption*{GPT-2 Medium} 
    \end{minipage}

    \vspace{0.3cm} 

    \begin{minipage}[b]{0.49\textwidth}
        \centering
        \resizebox{\textwidth}{!}{
 \begin{tabular}{llrrr}
        \toprule
        $\epsilon$ & $\tau$ & Perplexity & Repetition & MAUVE \\
        \midrule
        
0.95 & 1.0 & 4.596 & 0.066 & 0.823 \\
0.95 & 1.1 & 4.972 & 0.050 & 0.856 \\
0.95 & 1.5 & 6.851 & 0.013 & 0.909 \\
0.95 & 2.0 & 9.883 & 0.005 & 0.942 \\
0.95 & 2.5 & 14.084 & 0.002 & 0.942 \\
0.95 & 3.0 & 19.634 & 0.002 & 0.930 \\
0.95 & 3.5 & 27.779 & 0.001 & 0.913 \\
0.95 & 4.0 & 39.256 & 0.001 & 0.837 \\
\midrule
0.99 & 1.0 & 4.683 & 0.066 & 0.826 \\
0.99 & 1.1 & 5.083 & 0.046 & 0.861 \\
0.99 & 1.5 & 7.130 & 0.010 & 0.917 \\
0.99 & 2.0 & 10.629 & 0.006 & 0.947 \\
0.99 & 2.5 & 15.958 & 0.001 & 0.947 \\
0.99 & 3.0 & 24.128 & 0.001 & 0.919 \\
0.99 & 3.5 & 37.613 & 0.001 & 0.845 \\
0.99 & 4.0 & 60.031 & 0.001 & 0.685 \\
        \bottomrule
    \end{tabular}
        }
        \caption*{GPT-2 Large}
    \end{minipage}
    \hfill
    \begin{minipage}[b]{0.49\textwidth}
        \centering
        \resizebox{\textwidth}{!}{
 \begin{tabular}{llrrr}
        \toprule
        $\epsilon$ & $\tau$ & Perplexity & Repetition & MAUVE \\
        \midrule
0.95 & 1.0 & 5.146 & 0.050 & 0.861 \\
0.95 & 1.1 & 5.559 & 0.033 & 0.891 \\
0.95 & 1.5 & 7.475 & 0.014 & 0.935 \\
0.95 & 2.0 & 10.541 & 0.004 & 0.950 \\
0.95 & 2.5 & 14.636 & 0.002 & 0.948 \\
0.95 & 3.0 & 20.458 & 0.002 & 0.929 \\
0.95 & 3.5 & 28.410 & 0.001 & 0.919 \\
0.95 & 4.0 & 39.374 & 0.001 & 0.873 \\
\midrule
0.99 & 1.0 & 5.219 & 0.044 & 0.852 \\
0.99 & 1.1 & 5.660 & 0.032 & 0.886 \\
0.99 & 1.5 & 7.784 & 0.010 & 0.943 \\
0.99 & 2.0 & 11.333 & 0.003 & 0.958 \\
0.99 & 2.5 & 16.690 & 0.003 & 0.952 \\
0.99 & 3.0 & 24.796 & 0.002 & 0.924 \\
0.99 & 3.5 & 38.056 & 0.001 & 0.885 \\
0.99 & 4.0 & 60.236 & 0.001 & 0.739 \\
        \bottomrule
    \end{tabular}
        }
        \caption*{GPT-2 XL}
    \end{minipage}
    \caption{Evaluations on the text generated by different types of GPT-2 models using Game sampling under different hyperparameters.}
    \label{tab:gpt-hyperparameters}
\end{table}

\begin{table}[htbp]
    \centering
   
    \begin{minipage}[b]{0.49\textwidth}
        \centering \vspace{1em}
        \resizebox{\textwidth}{!}{
 \begin{tabular}{llrrr}
        \toprule
        $\epsilon$ & $\tau$ & Perplexity & Repetition & MAUVE \\
        \midrule
0.95 & 1.0 & 5.757 & 0.069 & 0.640 \\
0.95 & 1.1 & 6.285 & 0.049 & 0.670 \\
0.95 & 1.5 & 8.528 & 0.015 & 0.759 \\
0.95 & 2.0 & 12.313 & 0.005 & 0.794 \\
0.95 & 2.5 & 17.210 & 0.003 & 0.811 \\
0.95 & 3.0 & 24.362 & 0.001 & 0.801 \\
0.95 & 3.5 & 33.905 & 0.002 & 0.778 \\
0.95 & 4.0 & 48.921 & 0.001 & 0.664 \\
\midrule
0.99 & 1.0 & 5.897 & 0.066 & 0.664 \\
0.99 & 1.1 & 6.436 & 0.046 & 0.687 \\
0.99 & 1.5 & 8.957 & 0.013 & 0.762 \\
0.99 & 2.0 & 13.263 & 0.004 & 0.809 \\
0.99 & 2.5 & 19.729 & 0.002 & 0.833 \\
0.99 & 3.0 & 29.696 & 0.002 & 0.791 \\
0.99 & 3.5 & 46.506 & 0 & 0.720 \\
0.99 & 4.0 & 77.289 & 0.001 & 0.522 \\
        \bottomrule
    \end{tabular}
        }
        \caption*{GPT-J-6B}
    \end{minipage}
    \hfill
        \begin{minipage}[b]{0.49\textwidth}
        \centering \vspace{1em}
        \resizebox{\textwidth}{!}{
 \begin{tabular}{llrrr}
        \toprule
        $\epsilon$ & $\tau$ & Perplexity & Repetition & MAUVE \\
        \midrule
0.95 & 1.0 & 8.500 & 0.131 & 0.842 \\
0.95 & 1.1 & 9.938 & 0.134 & 0.831 \\
0.95 & 1.5 & 14.000 & 0.128 & 0.858 \\
0.95 & 2.0 & 23.875 & 0.149 & 0.843 \\
0.95 & 2.5 & 36.250 & 0.162 & 0.834 \\
0.95 & 3.0 & 52.000 & 0.173 & 0.813 \\
0.95 & 3.5 & 63.750 & 0.174 & 0.797 \\
0.95 & 4.0 & 87.000 & 0.182 & 0.753 \\
\midrule
0.99 & 1.0 & 8.938 & 0.130 & 0.831 \\
0.99 & 1.1 & 10.250 & 0.134 & 0.845 \\
0.99 & 1.5 & 15.625 & 0.136 & 0.854 \\
0.99 & 2.0 & 26.625 & 0.153 & 0.840 \\
0.99 & 2.5 & 41.750 & 0.165 & 0.822 \\
0.99 & 3.0 & 60.000 & 0.181 & 0.806 \\
0.99 & 3.5 & 84.500 & 0.178 & 0.759 \\
0.99 & 4.0 & 119.500 & 0.177 & 0.686 \\
        \bottomrule
    \end{tabular}
        }
        \caption*{Llama-2-7B}
    \end{minipage}
    \caption{Evaluations on the text generated by GPT-J-6B and Llama-2-7B models using Game sampling under different hyperparameters.}
    \label{tab2:gpt-hyperparameters}
\end{table}

\begin{figure}[htbp]
    \centering
    \begin{subfigure}{0.49\textwidth}
        \centering
        \includegraphics[width=1.1\textwidth]{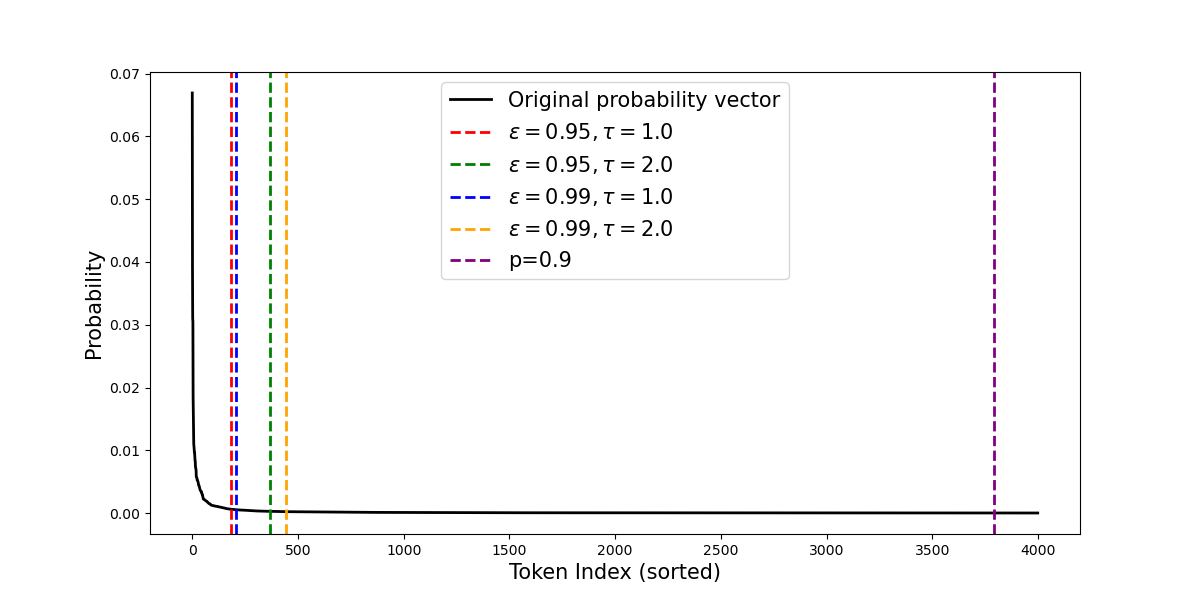} 
        \caption{context: 1 token}
        \label{fig:(one_token)}
    \end{subfigure}
    \hfill
    \begin{subfigure}{0.49\textwidth}
        \centering
        \includegraphics[width=1.1\textwidth]{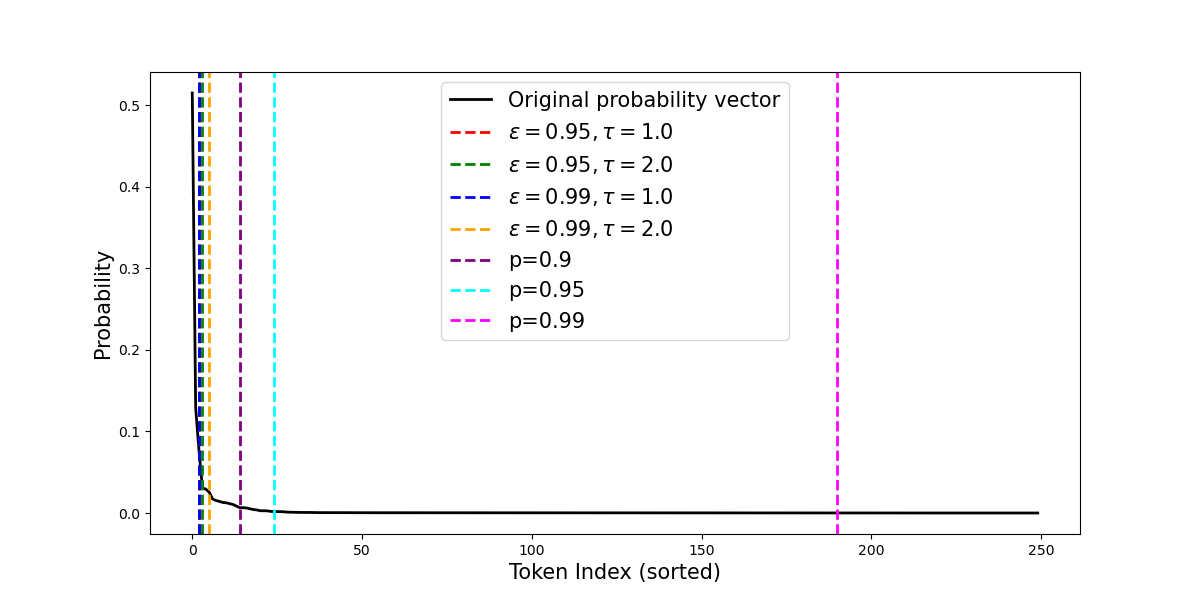} 
        \caption{context: 35 tokens}
        \label{fig:(35_token)}
    \end{subfigure}
    
    \caption{Next-token probability distribution in GPT-2 XL model and truncation threshold of Game sampling and Nucleus sampling.}
    \label{fig:main}
\end{figure}

\end{document}